%
%
%
%
%
\RequirePackage{fix-cm}
\documentclass[twocolumn]{svjour3}          
\smartqed  
\usepackage{graphicx}
\usepackage{times}
\usepackage{graphicx}
\usepackage{amsmath}
\usepackage{amssymb}
\usepackage{graphicx}
\usepackage{amssymb}
\usepackage{amsfonts}
\usepackage{mathrsfs}
\usepackage{algorithm}
\usepackage{algorithmic}
\usepackage{array}
\usepackage{booktabs}
\usepackage{float}
\usepackage[table]{xcolor}\usepackage{setspace}
\usepackage{bm}
\usepackage{multirow}
\usepackage{makecell}
\allowdisplaybreaks[2]
\renewcommand{\algorithmicrequire}{\textbf{Input:}}
\renewcommand{\algorithmicensure}{\textbf{Output:}}
\setlength{\abovecaptionskip}{2pt}
\setlength{\belowcaptionskip}{0pt}

\newenvironment{shrinkeq}[1]
{ \bgroup
  \addtolength\abovedisplayshortskip{#1}
  \addtolength\abovedisplayskip{#1}
  \addtolength\belowdisplayshortskip{#1}
  \addtolength\belowdisplayskip{#1}}
{\egroup\ignorespacesafterend}
\DeclareMathOperator*{\argmin}{argmin}
%
\usepackage[pagebackref=false,breaklinks=true,letterpaper=true,colorlinks,bookmarks=false]{hyperref}
%
%
%
%
%
\def\X{{\mathbf{X}}}
\def\U{{\mathbf{U}}}
\def\V{{\mathbf{V}}}

\def\G{{\mathbf{G}}}
\def\E{{\mathbf{E}}}
\def\Z{{\mathbf{Z}}}

\begin{document}

\title{ Enhanced 3DTV Regularization and Its Applications on Hyper-spectral Image Denoising and Compressed Sensing
}
\author{Jiangjun Peng \and
        Qi Xie  \and
        Qian Zhao \and
        Yao Wang \and
        Yee Leung \and
        Deyu Meng
}

\institute{J. Peng, Q. Xie, Q. Zhao, Y. Wang, D. Meng (corresponding author) \at
Institute for Information and System
Sciences and Ministry of Education Key Lab of Intelligent Networks and
Network Security, Xian Jiaotong University, Xi��an, Shaan��xi, 710049 P.R.
China\\
              \email{dymeng@mail.xjtu.edu.cn}           
           \and
           Y. Leung \at
              Department of Geography and Resource Management
and Institute of Future Cities, The Chinese University of Hong Kong, Shatin,
Hong Kong.
           \and
           Jiangjun Peng and Qi Xie made equal contributions to this work.
}


\maketitle

\begin{abstract}
The 3-D total variation (3DTV) is a powerful regularization term, which encodes the local smoothness prior structure underlying a hyper-spectral image (HSI), for general HSI processing tasks. This term is calculated by assuming identical and independent sparsity structures on all bands of gradient maps calculated along spatial and spectral HSI modes. This, however, is always largely deviated from the real cases, where the gradient maps are generally with different while correlated sparsity structures across all their bands. Such deviation tends to hamper the performance of the related method by adopting such prior term. To this issue, this paper proposes an enhanced 3DTV (E-3DTV) regularization term beyond conventional 3DTV. Instead of imposing sparsity on gradient maps themselves, the new term calculated sparsity on the subspace bases on the gradient maps along their bands, which naturally encode the correlation and difference across these bands, and more faithfully reflect the insightful configurations of an HSI. The E-3DTV term can easily replace the previous 3DTV term and be embedded into an HSI processing model to ameliorate its performance. The superiority of the proposed methods is substantiated by extensive experiments on two typical related tasks: HSI denoising and compressed sensing, as compared with state-of-the-arts designed for both tasks.

\keywords{hyperspectral image \and denoising \and compressed sensing \and  sparsity \and correlation \and  enhanced 3DTV}
\end{abstract}
\section{Introduction}
\label{intro}

The radiance of a real scene is distributed across a wide range of spectral bands.
A hyperspectral image (HSI) consists of various intensities that represent the integrals of the radiance captured by sensors over hundreds of discrete bands. As compared with traditional image systems, HSI facilitates delivering more faithful representation for real scenes, and thus tends to be better performed on various computer vision tasks, such as classification \cite{Zhao2016High}, super-resolution \cite{Dong2016HSI}, compressed sensing \cite{Zhang2015HSI,Mart2015HYCA}, and mineral exploration \cite{W2013S,Goetz2009}.

In real cases, however, an HSI is always significantly corrupted by noises that are generally conducted by sensor sensitivity, photon effects, light condition and/or calibration error \cite{Goetz2009}. The HSI denoising problem is thus a critical issue and the resolving of the problem could greatly ameliorate the performance of the subsequent HSI processing tasks. Besides,  the data need to be transmitted to ground station to deal with, but an HSI practically collected from an airborne sensor or a satellite is always with hundreds of image bands, which makes the HSI requiring a large storage space. This will conduct severe issues of low efficiency and high cost on transmitting them to the ground stations. Thus it is necessary to design effective techniques on HSI compressed sensing to satisfy the on-time transmit.

\begin{figure*}
\centering
\includegraphics[width=1\linewidth]{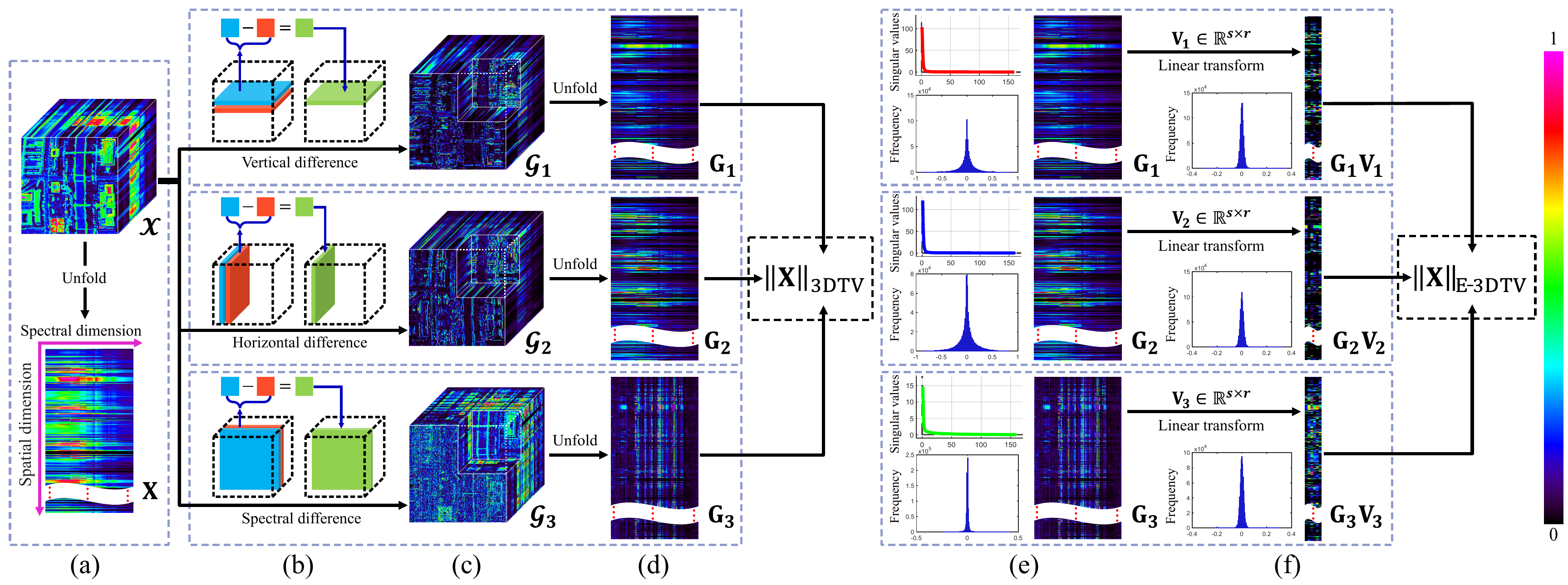}
\caption{Illustration of the 3DTV and the proposed E-3DTV regularization terms. (a) An example of real clean HSI, represented as a tensor $\mathcal{X}\in\mathbb{R}^{200\times200\times160}$ (upper). It can also be represented as the unfolding matrix of $\mathcal{X}$ along the spectrum mode, $\X\in\mathbb{R}^{(200\cdot200)\times160}$ (lower); (b) Illustrations of the difference operators along the spatial height, width and spectral modes, respectively; (c) The gradient maps of $\mathcal{X}$ in spatial height, width and spectrum, represented as $\mathcal{G}_n\in\mathbb{R}^{200\times200\times160}$, $n = 1,2,3$, respectively. Each of these tensor is stacked by 160 slices of gradient maps; (d) The unfolding matrices of the gradient maps, represented as $\G_n$, $n=1,2,3$; (e) The singular values of the matrices in (d) and the frequency of their elements; (f) A illustration of subspace basis matrices $\G_n\V_n$s, $n = 1,2,3$, implicitly calculated by the proposed E-3DTV regularizer, and the frequency of their elements.}
\label{fig_structure_sparsity}
\end{figure*}


One of the most widely utilized priors for HSI denoising/compressed sensing is the local smoothness prior.
The spatial local smoothness prior refers to the fact that similar objects/scenes (with shape) are often adjacently distributed with similar spectrum wave, and the spectral smoothness prior delivers the fact that adjacent bands of images of an HSI are usually collected with similar sensor parameter settings, and thus with similar values.
Such local smoothness prior structure possessed by an HSI can be equivalently understood as the sparsity of the gradient maps calculated along both the spatial and spectral modes of the HSI, as shown in Fig. \ref{fig_structure_sparsity}. The local smoothness prior of HSI space and spectrum can then be naturally encoded as the total variances along different modes of the HSI ${\X}\in\mathbb{R}^{hw\times s}$ ($h,w,s$ are the sizes of the spatial height, width and spectrum of the HSI, respectively), that is, the $\ell_1$ norm on the gradient maps $\G_n\in\mathbb{R}^{hw\times s}$ ($n=1,2,3$), where $\G_1$, $\G_2$ and $\G_3$ represent the gradient maps of $\X$ calculated along  its spatial height, width and spectrum modes, respectively\footnote{The definitions for $\G_n, n = 1,2,3$ will be provided in detail in Section 3.}. This term is called 3D-total variation regularization, or 3DTV briefly, with the form:
\begin{equation}\label{3DTV}
  \|{\X}\|_{\scriptsize{\mbox{3DTV}}}=\sum_{n=1}^3\|\G_n\|_1.
\end{equation}
It has been extensively shown that this term can be very helpful for various HSI processing issues \cite{He2016,Wang2017Hyperspectral,Hes2016}. Some even attain state-of-the-art performance for certain tasks.


Although being successfully utilized in various tasks, the 3DTV regularization still has not sufficiently considered the insightful sparsity structure knowledge underlying the gradient maps of an HSI. The most typical limitation reflects on its similar sparsity consideration (i.e., identical and independently distributed Laplacian prior distribution set on gradient maps) imposed on all bands of the gradient maps in spectrum, spatial width and height. That is, the sparsity extents on all bands of these gradient maps are implicitly assumed to be similar and unrelated. This, however, is always deviated from the real cases. On one hand, in different bands of the gradient maps, instead of identical, the sparsity exist evident variations, which can be easily observed from Fig. \ref{fig_structure_sparsity}(c). On the other hand, instead of independent, there are evident correlations across different bands of the gradient maps of an HSI, as clearly shown in Fig. \ref{fig_structure_sparsity}(e), which are naturally succeeded from the band correlation property of the original HSI \cite{ZHY2014HSI,XY2016Hyperspectral}. Such a prior knowledge deviation from the real HSIs makes the capability of such a useful regularization term still has a large room to be further strengthened.

To alleviate this issue, this paper proposes an enhanced 3DTV regularization term. While similar to the conventional 3DTV, the proposed term is also operated on gradient maps of height, width and spectrum mode of HSI, the new term applies the sparsity measure ($\ell_1$ norm, just like the conventional 3DTV) on their subspace basis maps along bands instead of the gradient maps themselves.
Each basis is obtained by a linear combination of the original gradient map vectors along bands: $\G_n\V_n$, where $\V_n$ is the transformation matrix, with size $s\times r$, facilitating to find the $r$ ($< s$) bases of $\G_n$. Such new regularization term rationally reflects practical correlated and different sparsity extents across different bands of the gradient maps.
Specifically, representing the gradients by such calculated less bases delivers the sparsity correlation insight of an HSI, and their different coefficients $\V_n$ (which will be introduced in detail in Section 4) represents the difference among bands of these gradient maps. Besides, just like other subspace learning methods, the gradient bases always contain much less bands as compared with original gradient maps, making them less negatively influenced by embedded corruptions, and more robust for different HSI processing tasks in practical scenarios.

In summary, this paper makes mainly three-fold contributions as follows:

\begin{enumerate}
\item[1)] We propose an enhanced 3DTV (E-3DTV, briefly) regularization term, to better reflect the sparsity insight of the gradient maps underlying an HSI. The non-i.i.d. sparse distribution prior characteristic of gradient maps along all HSI bands is more faithfully represented by the new regularization, and thus it is expected to  replace the 3DTV term of a general HSI processing model to improve its performance.

\item[2)] By easily using the proposed E-3DTV as the unique regularization term, we formulate two models for two typical HSI processing tasks: HSI denoising and compressed sensing. The ADMM algorithms are readily designed to solve the corresponding models. The closed-form updating equations are deduced for each step of both algorithms, and they can thus be implemented efficiently.

\item[3)] Comprehensive experimental results substantiate the superiority of the proposed method beyond state-of-the-art methods previously designed on both tasks. It is verified that such easy substitution of E-3DTV to 3DTV can always bring significant performance improvements on most of experiments.
\end{enumerate}

The remainder of this paper is organized as follows. Section 2 reviews related work. Section 3 introduces some notations and preliminaries related to this work. Section 4 presents the new regularization term, as well as the rationality of its formulation. Section 5 gives the HSI denoising model by involving E-3DTV regularization term, and Section 6 demonstrates the corresponding experimental results. Section 7 provides the model for HSI compressed sensing using the E-3DTV as its regularization, and Section 8 reports the related experimental results. Conclusions are finally made in Section 9. Throughout the paper, we denote scalars, vectors, matrices and tensors as the non-bold, bold lower case, bold upper case and upper cursive letters, respectively.

\section{Related Work}
\label{sec:1}

In the following, we will review the most relevant studies on two investigated HSI processing tasks, HSI denoising and HSI compressed sensing, respectively.

\subsection{HSI Denoising}

A number of methods  such as K-SVD \cite{Elad2006Image}, non-local means \cite{Buades2005A}, BM3D \cite{Dabov2007Image} and wavelet shrinkage \cite{Othman2006Noise} have been developed for 2D image denoising. These methods can be directly applied to HSI denoising by denoising each band of image independently. However, such an easy manner ignores the correlations among the spectral bands or spatial pixels, and thus usually could not provide satisfactory results.

More substantial progress on HSI denoising have been made by directly constructing models on HSIs, especially considering their spectral information. Most of them considered to construct a rational prior to deliver intrinsic structures underlying HSI. One representative prior is global spectral correlation. This prior indicates that the images located at different HSI bands are generally closely correlated. A generally adopted trick to formulate such a prior is first to unfold the HSI along its spectral mode to form a HSI matrix, and then use low-rank regularization terms to encode this prior \cite{Chen2011Denoising}, \cite{Gu2014Weighted}, \cite{He2015Hyperspectral}, \cite{Peng2014Reweighted}, \cite{XY2016Hyperspectral}, \cite{ZHY2014HSI}. Such a low-rank structure can be formulated implicitly into an HSI denoising model by a nuclear norm (e.g., the NNM method \cite{Candes2009Robust}, \cite{Wright2009Robust} ) or its non-convex variations, like the weighted nuclear norm (e.g., the WNNM method \cite{Gu2014Weighted}, \cite{Peng2014Reweighted}) and the Schatten $p$-norm (e.g., the WSNM method \cite{XY2016Hyperspectral} ). Besides, it can also be directly encoded as an explicit low rank matrix factorization form, like that used in the LRMR method \cite{ZHY2014HSI,Zhou2011GoDec}. Very recently, to more faithfully deliver the multi-factor affiliation underlying HSI, some HSI denoising methods \cite{Lu2017Tensor}, \cite{Wang2017Hyperspectral} treated an HSI directly as its original 3-mode tensor form, and use certain low-rank tensor approximation techniques \cite{Kolda2009Tensor}, e.g., the Tucker and CP decompositions, to characterize such low-rank characteristic under tensor expressions.

The spatial nonlocal similarity prior is another widely utilized prior for the task. A practical HSI always contains a collection of similar local patches all over the space, composing of homologous aggregation of micro-structures. Such a non-local spatial similarity prior among HSI patches is also commonly used in the HSI denoising model. Specifically, by extracting the common patterns among these nonlocally similar patches, the spatial noise is expected to be prominently alleviated. There are mainly two manners to encode such prior knowledge. One is by dictionary learning, which aims to build each local patch similar group of an HSI by the linear combination of a small number of atoms from a dictionary \cite{Mairal2009}, \cite{Liu2013A}, \cite{Qian2013Hyperspectral}, \cite{Xing2011Dictionary}. It has been empirically verified that such modeling can lead to a good performance of HSI denoising. Typical methods designed in this way include BM4D \cite{Maggioni2013Nonlocal} and TDL \cite{Peng2014Decomposable}. Another rational way to encode such non-local similarity prior is to specify a low-rank regularization term on these similar groups, and use low-rank matrix/tensor approximation techniques to realize HSI recovery \cite{Dong2015Denoising}, \cite{Mairal2009},  \cite{xie2016multispectral}.

Besides, spatial local-smoothness prior is also a powerful prior utilized for HSI denoising. The gray values of an HSI collected from natural scenes are generally continuous along its spatial and spectral modes. Such prior structure can be readily characterized by the well known total variation (TV) regularization term \cite{Rudin1992Nonlinear}, \cite{Chambolle2004An}. Multiple HSI denoising methods \cite{Bouali2011Toward}, \cite{He2016}, \cite{Wu2017Structure}, \cite{Wang2017Hyperspectral}, \cite{Jiang2016Hyperspectral} have been raised by utilizing such a regularization, and the utilization of such prior has been verified to be helpful for the fine denoising performance. Besides, due to the convexity and conciseness of such a regularization term, it is easy to integrate this term with other prior terms into a unified model to enhance the capability of the model on HSI recovery. A typical example is the LRTV method \cite{He2016}, which combines this smoothness prior with the global correlation prior along spectrum, and obtained superior performance compared with other TV-based HSI denoising approaches.

\subsection{HSI Compressed Sensing}

HSI compressed sensing aims to possibly precisely reconstruct an HSI from a small set of compressed measurements imposed on it \cite{Candes2006Robust,Donoho2006Compressed}, which is a typical inverse problem, and needs to make it calculable by setting regularization/prior terms on the to-be-recovered HSI variables.

Based on the prior knowledge that an HSI can be sparsely represented under an appropriate redundant dictionary, Duarte and Baraniuk \cite{Duarte2010Kronecker} proposed a compressed sensing method based on Kronecker product. They considered to use Kronecker products from sparse basis and sensing matrix. For a multi-dimensional signal, its $d$-section is defined to be the part where all dimension indicators other than the $d$-th dimension are fixed, than a sparse base of a multi-dimensional signal may consist of Kronecker products of sparse bases for each $d$-section.


Similar to the HSI denoising method, the global spectral correlation and spatial smoothness priors have also been considered for the task. In 2011, Waters et al.\cite{Waters2011SpaRCS} proposed a low rank and sparse matrix recovery method under compressed sensing, and applied the method to video and HSI compression recovery. Golbabaee et al. \cite{Golbabaee2012Hyperspectral} further discussed the correlation of HSIs in the spectral dimension, and shows certain joint sparseness in the spatial dimension under a wavelet representation and proposed a method based on low rank and joint sparse matrix recovery. In 2013, Golbabaee et al. \cite{Golbabaee2013Joint} proposed the joint norm and TV norm minimization model to encode the correlation of the spectral bands and local smoothness prior. Gogna et al. \cite{Gogna2015Split} designed the split-Breggman algorithm with low rank and sparse or joint sparse signal recovery, combined with the Kronecker product method, and applied it to the compression recovery of HSI images.

Besides, spectral correlation priors have also been used for the task by directly setting the HSI as a tensor and constructing tensor processing models  \cite{Yang2015Compressive}, \cite{Wang2017Compressive} for the problem in the recent years. For example, Yang et al. \cite{Yang2015Compressive}  used a  tensor sparse representation of a HSI cube,  under nonlinear compressed operator, which got some comparable results. To further ameliorate the performance especially under lower sampling rates, Wang et al. proposed the JTenDTV  \cite{Wang2017Compressive} model by jointing the Tucker decomposition and  3DTV norm to character the correlation of spectral bands as well as the spatial local smoothness prior, and got the comparable result.

Additional, there are also some methods proposed based on combining the compressed sensing with other HSI tasks to improve the performance of compressed sensing. The common way of fusion is to use unmixing \cite{Dias2012A}, \cite{Li2012A} to improve the performance of HSI compressed sensing.

%

\section{Notations and Preliminaries}

For a given HSI $\mathcal{X}\in\mathbb{R}^{h\times w\times s}$, where $h$, $w$ and $s$ denote the sizes of spatial height, width and  number of spectrum in the HSI, respectively. We denote the unfolding matrix of $\mathcal{X}$ along the spectral mode as $\X\in\mathbb{R}^{hw\times s}$, which satisfies $\X = \mbox{unfold}_3(\mathcal{X})$ and $\mathcal{X} = \mbox{fold}_3(\X)$.

 Let $\mathcal{X}(i,j,k)$ denote the intensity at the voxel $(i,j,k)$, and let
 \begin{equation}\label{Diff}
 \begin{split}
     \mathcal{G}_1(i,j,k) =  & \mathcal{X}(i,j,k)-\mathcal{X}(i+1,j,k) \\
     \mathcal{G}_2(i,j,k) =  & \mathcal{X}(i,j,k)-\mathcal{X}(i,j+1,k) \\
     \mathcal{G}_3(i,j,k) = & \mathcal{X}(i,j,k)-\mathcal{X}(i,j,k+1)
 \end{split}
 \end{equation}
 denote three difference operations at the voxel $(i,j,k)$ along the spatial height, width and the spectrum, respectively.
We can now introduce the difference operations calculated three different modes on an HSI $\mathcal{X}$, i.e., $D_n(\cdot)$, as follows:
\begin{equation}\label{TensorG}
  \mathcal{G}_n = D_n(\mathcal{X}_n), \forall n = 1,2,3,
\end{equation}
where $\mathcal{G}_n\in \mathbb{R}^{h\times w\times s}$, $n = 1,2,3$ are the gradient-map tensors  corresponds to spatial height, weight and spectrum modes respectively\footnote{Here, we use zero padding for $\mathcal{X}$ before applying difference operation on it, which can keep the size of $\mathcal{G}_n$ the same as $\mathcal{X}$ and make calculation convenient in the later sections.}.
Then, we denote the unfolding matrices of these gradient maps as:
\begin{equation}\label{MatrixG}
  \G_n  = \mbox{unfold}_3(\mathcal{G}_n), \forall n = 1,2,3.
\end{equation}
One can also see Fig. \ref{fig_structure_sparsity} (b) (c) and (d) for easy understanding these operations.

\begin{figure*}
\centering
\includegraphics[width=1\linewidth]{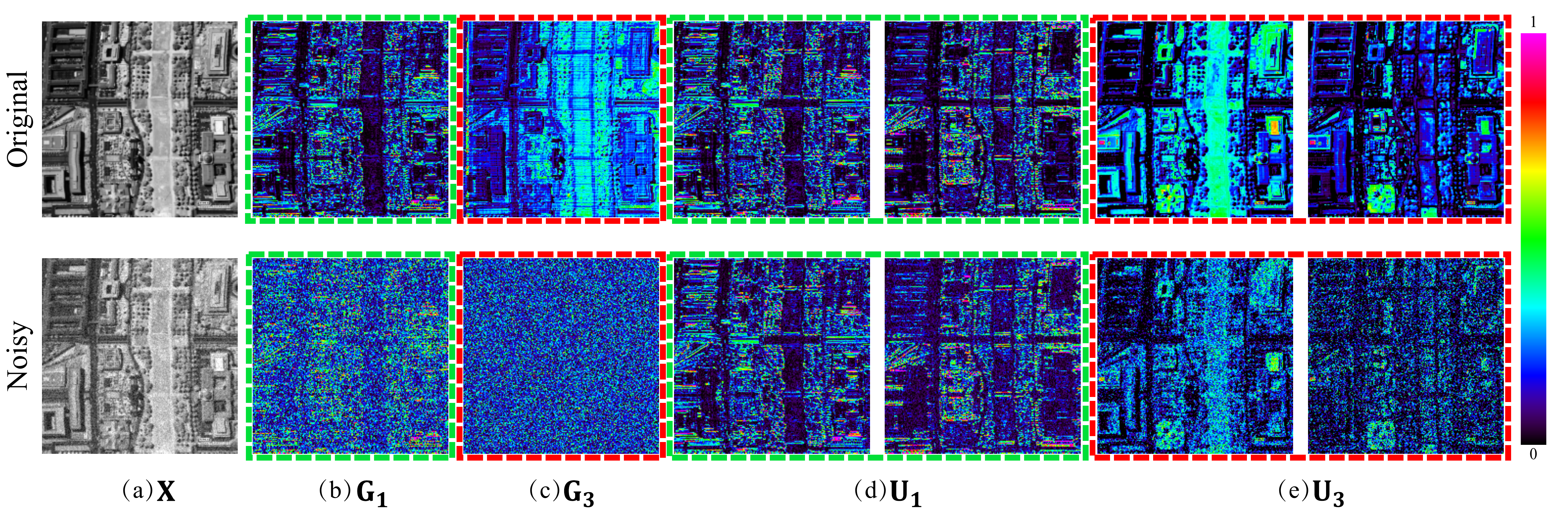}
\vspace{-6.5mm}
\caption{(a) A band of a clean HSI (upper), and the same band with Gaussian noise added to it (lower); (b)-(c) The corresponding gradient maps of images in (a), calculated along spatial height ($\nabla_1\X$) and spectral ($\nabla_3\X$) modes, respectively. (d)-(e) Illustration of two typical basis gradient maps $\U_1$ and $\U_3$ calculated on the original and noisy HSI. Since each column in $\U_n$ is of size $hw\times 1$ which can be folded into $h\times w$ for easy observation.}
\label{fig_lowrank}
\end{figure*}

It is easy to find that the difference operations $D_1$ and $D_2$ on $\mathcal{X}$ is equivalent to applying subtractions between rows in $\X$, and $D_3$ on $\mathcal{X}$ is equivalent to applying subtractions between columns in $\X$. This means all the three different operations are linear\footnote{Subtractions between rows of a matrix is equivalent to pre-multiply the matrix by a proper matrix, and subtractions between columns of a matrix is equivalent to post-multiply the matrix by a proper matrix.}. We can then denote following three linear operations:
\begin{equation}\label{D}
  \nabla_n\X =\mbox{unfold}_3\left(D_n\left(\mbox{fold}(\X)\right)\right)=\G_n, \forall n = 1,2,3.
\end{equation}
Note that this linear operations encode the relationship between an HSI $\X\in\mathbb{R}^{hw\times s}$ and its gradient maps $\G_n\in\mathbb{R}^{hw\times s}$s, facilitating us to construct HSI processing method with priors on the gradient maps.

The most commonly used sparsity measure on the gradients maps, $\G_n$s, is the so call $\ell_1$ norm sparse measure:
\begin{equation}\label{L1}
  S\left(\G_n\right) = \|\G_n\|_1.
\end{equation}
By adopting this sparsity measure, the widely used smoothness regularization term for HSI, 3DTV \cite{Wang2017Hyperspectral}, \cite{Wang2017Compressive}, can be constructed by
\begin{equation}\label{3DTV}
  \|\X\|_{\scriptsize{\mbox{3DTV}}}=\sum_{n=1}^3S(\nabla_n\X)=\sum_{n=1}^3S(\G_n).
\end{equation}
One can also see Fig. \ref{fig_structure_sparsity} for easy understanding this term.
We can then present the enhanced 3DTV term beyond this conventional one in the next section.


\section{Enhanced 3DTV Regularization
}

We first discuss how to construct an enhanced sparsity term for each $\G_n$, which will help us construct the E-3DTV norm in the following sections.
\subsection{Enhanced TV Regularization Term }

For an given HSI whose unfolding matrix is $\X\in\mathbb{R}^{hw\times s}$, we seek to model the correlated sparse structure along its gradient map, i.e., our goal is to construct a rational sparsity measure $S_{\footnotesize{\mbox{E-TV}}}(\G_n)$ for each gradient map $\G_n = \nabla_n \X$, $n=1,2,3$.

As studied in many previous works, the unfolded HSI $\X$ can be rationally assumed to be of low rank \cite{Gu2014Weighted}, \cite{He2015Hyperspectral}, \cite{Peng2014Reweighted}, \cite{XY2016Hyperspectral}, \cite{ZHY2014HSI}, \cite{He2016}. Then, it is easily to find that $\G_n$  also possesses  low-rankness property\footnote{ Since $\nabla_n(\cdot)$ is a linear operation, we have $\mbox{rank}(\nabla_n\X)\leq\mbox{rank}(\X)$.
}. This means $\G_n$ can be expressed in following low-rank matrix factorization form:
\begin{equation}\label{UV}
  \G_n=\U_n\V_n^T,
\end{equation}
where $\U_n\in\mathbb{R}^{hw\times r}$ and $\V_n\in\mathbb{R}^{s\times r}$, $r$($<hw,s$) is the rank of the HSI. In addition, it easy to find the columns in $\U_n$ is in fact a set of basis to the gradient maps $\G_n$.
We can also rationally assume $\V_n^T\V_n=I$ to make the subspace bases contain possibly compensate while non-repetitive information underlying the original $\G_n$.
In this case, we can rewrite equation (\ref{UV}) in following formulation:
\begin{equation}\label{GV}
  \U_n=\G_n\V_n,
\end{equation}
where $\U_n$ is now represented as a linear combination of all columns of gradient maps. Furthermore, to guarantee the bases keep sufficient information of the original $\G_n$, we further assume $\|\U_n\|_F$, i.e., $\|\G_n\V_n\|_F$, is of approximately similar capacity with the original $\|\G_n\|_F$.

Fig. \ref{fig_lowrank} shows the bases calculated on a clean HSI and it noisy observation. It is easy to observe from Fig. \ref{fig_lowrank} (b) and (c) that by adding a little simulated Gaussion noise to the HSI, the quality of gradient maps, as well as its sparsity structures, can be badly damaged, which makes the corrupted images not easily to be restored by only using the original 3DTV regularization term (\ref{3DTV}) directly imposed on it. However, we can easily observe form Fig. \ref{fig_lowrank} (d) and (e) that $\U_n$ is evidently less corrupted than original gradient maps, and its sparse structure is evidently more evident and less hampered by noises. It is thus expected to be easier to recover the bases $\U_n$ from the corrupted ones by E-3DTV than to do this task on the original gradient maps by conventional 3DTV.


Thus, instead to design sparse regularization term for $\G_n$ itself, we seek to design sparse regularization for $\U_n$, i.e. for $\G_n\V_n$. Based on the aforementioned analysis, the E-3DTV measure for each $\G_n$, $n = 1,2,3$, is of the following implicit expression formulation:


\begin{equation}
\begin{split}\label{CSC_new}
S_{\scriptsize{\mbox{E-TV}}}(\G_n) &= \min_{\V_n \in \mathbb{R}^{s\times r}} \| \G_n\V_n \|_1 \\
&{ \rm s.t.}~ \|\G_n\V_n\|_F=\|\G_n\|_F, \mathbf{V}_n^T\mathbf{V}_n=\mathbf{I}.
\end{split}
\end{equation}
Here, $\V_n$ can be seen as a coefficient matrix that relates different gradient maps together.


It should be noted that the proposed E-3DTV measure is actually a linear transformed sparsity on $\G_n$, with the transforming matrix $\V_n$ to be determined by the input data automatically.
Compared with the traditional $\ell_1$ sparsity measure (\ref{L1}), the proposed  (\ref{CSC_new}) takes the correlation prior among gradient maps into consideration when constructing sparse regularization term. It takes the advance that there exist some low-rank basis for $\G_n$s, which is sparse but can represent all information in $\G_n$ with a coefficient matrix $\V_n$, and these bases are always more stable to be calculated than the gradient maps themselves when the HSI is corrupted.

Since it is not easy to  solve (\ref{CSC_new}) directly, we reformulate (\ref{CSC_new}) as the following equivalent formulation:
\begin{equation}
\begin{split}\label{CSC}
S_{\scriptsize{\mbox{E-TV}}}(\G_n) &= \min_{\U_n,\V_n}\|\mathbf{U}_n\|_1 \\
&{\rm s.t.}~ \G_n=\mathbf{U}_n\mathbf{V}_n^T, \mathbf{V}_n^T\mathbf{V}_n=\mathbf{I},\\
&\mathbf{U}_n \in\mathbb{R}^{hw\times r}, \mathbf{V}_n \in\mathbb{R}^{s\times r}.\\
\end{split}
\end{equation}
Here, we call two problems equivalent if from a solution of one, a solution
of the other is readily found, and vice versa.
The proof of the equivalence here is presented in theorem \ref{theorem_e} in the supplementary material.

\subsection{Enhanced 3DTV Regularization Term}

By adopting the proposed sparsity measure (\ref{CSC}), we can now construct E-3DTV regularization term imposed on gradient maps calculated along all its spectral, spatial width and height modes. In similar way as 3DTV regularization (\ref{3DTV}), we model our regularization term by summing the sparsity measure of the gradient maps along different modes:
\begin{equation}\label{Reg}
  \|\X\|_{\scriptsize{\mbox{E-3DTV}}} =\sum_{n=1}^3S_{\scriptsize{\mbox{E-TV}}}(\nabla_n\X)=\sum_{n=1}^3S_{\scriptsize{\mbox{E-TV}}}(\G_n).
\end{equation}
One can also see Fig. \ref{fig_structure_sparsity} for easy understanding this term.
Moreover, this term is equivalent to:
\begin{equation}\label{Reg2}
  \begin{split}
  \|\X\|_{\scriptsize{\mbox{E-3DTV}}}   &=  \sum_{n=1}^3\min_{\U_n,\V_n}\|\mathbf{U}_n\|_1\\
&{\rm s.t.} \nabla_n\X=\mathbf{U}_n\mathbf{V}_n^T, \mathbf{V}_n^T\mathbf{V}_n=\mathbf{I},\\
&  \mathbf{U}_n \in\mathbb{R}^{hw\times r}, \mathbf{V}_n \in\mathbb{R}^{s\times r}, n = 1,2,3.\\
\end{split}
\end{equation}

In the following sections, we will verify that the proposed E-3DTV regularization term (\ref{Reg2}) can be easily embedded into HSI processing models and the corresponding resolving algorithms can be easily designed.

\section{E-3DTV methods for HSI denoising}
\subsection{The E-3DTV based HSI denoising  model}

The denoising task is to separate the clean data and noise from the noisy data. In many of the denoising tasks, the noise is often assumed to be Gaussian, and a quadratic loss function between the clean data and noisy data is thus often exploited. However, in real hyperspectral scenes, the types of noise are various, such as sparse noise, stripes noise, dead lines, missing pixels, and so on \cite{XY2016Hyperspectral,He2016}, which obviously do not obey a Gaussian distribution. Therefore,  we choose the  $\ell_1$ norm, a more robust loss function to general heavy noises \cite{Wright2009Robust,Candes2009Robust}, to model the noise and construct our denoising model as:
\begin{equation}
\begin{split}\label{denoise_model0}
&\min_{\mathbf{X},\mathbf{E}} \tau\|\X\|_{\scriptsize{\mbox{E-3DTV}}}+\|\mathbf{E}\|_1 \\
&~{\rm s.t.}~\mathbf{Y}=\mathbf{X}+\mathbf{E},\\
\end{split}
\end{equation}
where $\mathbf{Y}$, $\X$, $\mathbf{E}\in \mathbb{R}^{hw \times s}$ denote the input noisy HSI,  to-be-reconstructed HSI and the noise embedded on it, respectively. $\mathbf{\tau}$ is tradeoff parameter between the proposed E-3DTV measure and the noise level. Based on (\ref{Reg2}), it is easy to find that this model is equivalent to:
\begin{equation}
\begin{split}\label{denoise_model}
&\min_{\mathbf{X},\mathbf{U}_n,\mathbf{V}_n,\mathbf{E}} \tau\sum_{n=1}^3\|\mathbf{U}_n\|_1+\|\mathbf{E}\|_1 \\
&~{\rm s.t.}~\mathbf{Y}=\mathbf{X}+\mathbf{E}, \\
&\qquad \nabla_n\mathbf{X}=\mathbf{U}_n\mathbf{V}_n^T, \mathbf{V}_n^T\mathbf{V}_n=\mathbf{I}\
\\
&\qquad \mathbf{V}_n \in\mathbb{R}^{s\times r}, \mathbf{U}_n \in\mathbb{R}^{hw\times r}, n=1,2,3. \\
\end{split}
\end{equation}
This optimization problem (\ref{denoise_model}) can be readily solved by the ADMM strategy as follows.

\subsection{The ADMM Algorithm }
To slove the proposed HSI denoising model by ADMM algorithm, the following augmented Lagrangian function is  firstly required to be minimized:
\begin{equation}
\small
\begin{split}\label{denosie_lag}
&\mathcal{L}(\mathbf{X},\mathbf{E},\mathbf{U}_n,\mathbf{V}_n,\mathbf{M}_n,\mathbf{\mathbf{\Gamma}}) =
\sum_{n=1}^3 \|\mathbf{U}_n\|_1 +\lambda \|\mathbf{E}\|_1\\
&+\sum_{n=1}^3{\langle \mathbf{M}_n, \nabla_n\mathbf{X}-\mathbf{U}_n\mathbf{V}_n^T\rangle + \frac{\mu}{2}\|\nabla_n\mathbf{X}-\mathbf{U}_n\mathbf{V}_n^T\|^2_F}\\
& +\langle \mathbf{\mathbf{\Gamma}},\mathbf{Y}-\mathbf{X}-\mathbf{E}\rangle+\frac{\mu}{2}\|\mathbf{Y}-\mathbf{X}-\mathbf{E}\|^2_F,
\end{split}
\end{equation}
where $\mathbf{M}_n$, $n = 1,2,3$ and $\mathbf{\mathbf{\Gamma}}$ are the Lagrange multipliers and $\mu$ is a positive scalar in ADMM algorithm.

In the ADMM framework, we need to alternatively optimize each variable involved in (\ref{denosie_lag}) with all others fixed. We then discuss how to solve its  sub-problems with respect to each involved variable.

\textbf{Update} $\mathbf{X}$. Extracting all items containing $\mathbf{X}$ from Eq. (\ref{denosie_lag}), we can obtain the sub-problem as:
\begin{equation}
\begin{split}\label{fun_x_denoise}
&\mathbf{X}^{k+1}:=\mathop{\argmin}_{\mathbf{X}} \sum_{n=1}^3\langle \mathbf{M}_n^{k}, \nabla_n\mathbf{X}-\mathbf{U}_n^{k}\mathbf{V}_n^{k^T}\rangle+ \\
&\langle \mathbf{\Gamma}^{k},\mathbf{Y}-\mathbf{X}-\mathbf{E}^{k}\rangle+\frac{\mu}{2}\sum_{n=1}^3\|\nabla_n\mathbf{X}-\mathbf{U}_n^{k}\mathbf{V}_n^{k^T}\|^2_F\\
&+\frac{\mu}{2} \|\mathbf{Y}-\mathbf{X}-\mathbf{E}^{k}\|^2_F.
\end{split}
\end{equation}

Optimizing (\ref{fun_x_denoise}) with respect to $\mathbf{X}$ can thus be equivalently treated as solving the following linear system:
\begin{equation}
\begin{split}
&(\mu\mathbf{I}+\mu\sum_{n=1}^3\mathbf{\nabla}_n^*\mathbf{\nabla}_n)(\mathbf{X}) = \mu(\mathbf{Y}-\mathbf{E}^{(k)})+\mathbf{\Gamma}^{k}\\
&\qquad +\mu\sum_{n=1}^3\mathbf{\nabla}_n^*(\mathbf{U}_n^{k}\mathbf{V}_n^{k^T})-\mathbf{\nabla}_n^*(\mathbf{M}_n^{k})\\
\end{split}
\end{equation}
where $\mathbf{\nabla}_n^*$ indicates the adjoint operator of ${\nabla}_n$. Attributed to the block-circulant
of matrix corresponding to the operator $\mathbf{\nabla}_n^*\mathbf{\nabla}_n,n=1,2,3$, it can be diagonalized by using the \textbf{FFT} matrix \cite{Ng1999A}. Then, $\mathbf{X}$ can be efficiently computed by
\begin{equation}
\small
\label{slover_x_denoise}
\left\{
\begin{split}
&\mathbf{H}_x=\mu(\mathbf{Y}-\mathbf{E}^{k})+\mathbf{\Gamma}^{k} +\sum_{n=1}^3 \!\mathbf{\nabla}_n^*(\mu\mathbf{U}_n^{k}\mathbf{V}_n^{k^T}-\mathbf{M}_n^{k}),\\
&\mathbf{T}_x=|\text{fftn}(\mathbf{D}_1)|^2+|\text{fftn}(\mathbf{D}_2)|^2+|\text{fftn}(\mathbf{D}_3)|^2, \\
&\mathbf{X}^{k+1}=\text{ifftn}\left(\frac{\text{fftn}(\mbox{Fold}(\mathbf{H}_x))}{\mu\mathbf{1}+\mu\mathbf{T}_x}\right),
\end{split}
\right.
\end{equation}
where fftn and ifftn indicate fast Fourier transform and its inverse transform, respectively.
$\left|\cdot\right|^2$is the elements-wise square, and the division is also performed element-wisely.

\textbf{Update} $\mathbf{U}_n,n=1,2,3$. Extracting all items containing $\mathbf{U}_n$ from Eq. (\ref{denosie_lag}), we can obtain:
\begin{equation}
\begin{split}\label{slover_U_denoise}
&\mathbf{U}_n^{(k+1)}:=\mathop{\argmin}_{\mathbf{U}_n} \|\mathbf{U}_n\|_1+\\
&\qquad \frac{\mu}{2}\|\mathbf{U}_n-(\nabla_n\mathbf{X}+\mathbf{M}_n^{k}/\mu)\mathbf{V}_n^{k}\|^2.\\
\end{split}
\end{equation}
This sub-problem can be efficiently solved by calling the known soft-thresholding operator \cite{Donoho2002De}:
\begin{equation}
\mathcal{S}_\mathbf{\Delta}(\mathbf{x})=\left\{
\begin{split}
{x}-\mathbf{\Delta} &   &\mbox{if} \quad\mathbf{x} > \mathbf{\Delta},  \\
{x}+\mathbf{\Delta}, &   & \mbox{if} \quad\mathbf{x} < -\mathbf{\Delta}, \\
0,                 &   & \qquad \mbox{otherwise},
\end{split}
\right.
\end{equation}
where $ x\in\mathbb{R}$ and $\mathbf{\Delta} >0 $, $\mathbf{\Delta} $ is the threshold value. Then the solution of (\ref{slover_U_denoise}) can be calculated in close-form as follows:
\begin{equation}
\label{slover_u_denoise}
\mathbf{U}_n^{k+1}=\mathcal{S}_{\frac{1}{\mu}}\left(\left(\nabla_n\mathbf{X}+\mathbf{M}_n^{k}/\mu\right)\mathbf{V}_n^{k}\right).
\end{equation}

\textbf{Update} $\mathbf{V}_n,n=1,2,3$. Extracting all items containing orthogonal $\mathbf{V}_n$ from Eq. (\ref{denosie_lag}), we have:
\begin{equation}
\begin{split}
\label{slover_V_denoise}
&\mathbf{V}_n^{k+1}:=\mathop{\argmin}_{\mathbf{V}_n}\frac{\mu}{2}\left\Vert\mathbf{U}_n^{k}\mathbf{V}_n^T-(\nabla_n\mathbf{X}+\frac{\mathbf{M}_n^{k}}{\mu})\right\Vert_F^2\\
&\qquad\quad =\mathop{\argmin}_{\mathbf{V}_n} \left\langle\left(\nabla_n\mathbf{X}+\frac{\mathbf{M}_n^{k}}{\mu}\right)^T\mathbf{U}_n^{k}, \mathbf{V}_n \right\rangle.
\end{split}
\end{equation}
The global solution of this sub-problem can be achieved in close-form by following theorem \cite{xie2016multispectral}:
\begin{theorem}\label{Theorem1}
For any $\mathbf{A}\in\mathbf{M}_{m,n}$, the global solution of the problem:
\begin{equation}
\label{theory_V}
\mathop{\min}_{\mathbf{V}\mathbf{V}^{T}=\mathbf{I}} \langle \mathbf{A},\mathbf{V} \rangle
\end{equation}
is $\mathbf{V}^*$=$\mathbf{B}\mathbf{C}^T$, where $\mathbf{A}=\mathbf{B}\mathbf{D}\mathbf{C}^T$ denotes the condensed SVD of $\mathbf{A}$.
\end{theorem}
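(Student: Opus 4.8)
The plan is to reduce the matrix optimization to a scalar inequality on the singular values of $\mathbf{A}$. Writing the Frobenius pairing as $\langle\mathbf{A},\mathbf{V}\rangle=\mathrm{tr}(\mathbf{A}^{T}\mathbf{V})$ and inserting the condensed SVD $\mathbf{A}=\mathbf{B}\mathbf{D}\mathbf{C}^{T}$, cyclic invariance of the trace gives $\langle\mathbf{A},\mathbf{V}\rangle=\mathrm{tr}(\mathbf{D}\,\mathbf{C}^{T}\mathbf{V}^{T}\mathbf{B})$. Setting $\mathbf{Z}=\mathbf{B}^{T}\mathbf{V}\mathbf{C}$, the objective collapses to $\sum_{i}\sigma_{i}\,\mathbf{Z}_{ii}$, where the $\sigma_{i}\ge 0$ are the diagonal entries of $\mathbf{D}$. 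The entire problem is thus controlled by how large (or small) the diagonal of $\mathbf{Z}$ can be made subject to the orthogonality constraint on $\mathbf{V}$, which is the natural setting for von Neumann's trace inequality.

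First I would record the structural facts forced by feasibility. Under the orthogonality constraint ($\mathbf{V}\mathbf{V}^{T}=\mathbf{I}$ as written, or $\mathbf{V}^{T}\mathbf{V}=\mathbf{I}$ as actually used in the $\mathbf{V}_n$-update) the matrix $\mathbf{V}$ is a partial isometry whose nonzero singular values all equal $1$; since $\mathbf{B}$ and $\mathbf{C}$ have orthonormal columns, $\mathbf{Z}=\mathbf{B}^{T}\mathbf{V}\mathbf{C}$ is then a contraction, i.e. $\|\mathbf{Z}\|_{2}\le 1$. Consequently $|\mathbf{Z}_{ii}|\le 1$ for every $i$, which combined with $\sigma_i\ge 0$ yields $-\sum_i\sigma_i\le\langle\mathbf{A},\mathbf{V}\rangle\le\sum_i\sigma_i$. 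The extremal value $\sum_i\sigma_i$ is attained precisely when $\mathbf{Z}_{ii}=1$ for all $i$ with $\sigma_i>0$; in the full-rank case a contraction with unit diagonal must equal the identity, so $\mathbf{B}^{T}\mathbf{V}\mathbf{C}=\mathbf{I}$ pins down $\mathbf{V}^{*}=\mathbf{B}\mathbf{C}^{T}$. I would then check directly that this candidate is feasible, $\mathbf{V}^{*}\mathbf{V}^{*T}=\mathbf{B}\mathbf{C}^{T}\mathbf{C}\mathbf{B}^{T}=\mathbf{B}\mathbf{B}^{T}=\mathbf{I}$, and that it realizes $\langle\mathbf{A},\mathbf{V}^{*}\rangle=\mathrm{tr}(\mathbf{D})=\sum_i\sigma_i$.

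The main obstacle is the sign, i.e. reconciling the extremizer $\mathbf{B}\mathbf{C}^{T}$ with the ``$\min$'' in the statement. The inequality above shows that $\mathbf{B}\mathbf{C}^{T}$ attains the \emph{largest} value $+\sum_i\sigma_i$ of $\langle\mathbf{A},\mathbf{V}\rangle$, while the literal minimum $-\sum_i\sigma_i$ is attained at $-\mathbf{B}\mathbf{C}^{T}$. The resolution must come from how $\mathbf{A}$ enters the $\mathbf{V}_n$-subproblem: expanding $\tfrac{\mu}{2}\|\mathbf{U}_n\mathbf{V}_n^{T}-\mathbf{W}\|_F^{2}$ with $\mathbf{W}=\nabla_n\mathbf{X}+\mathbf{M}_n^{k}/\mu$ produces a term $\tfrac{\mu}{2}\|\mathbf{U}_n\mathbf{V}_n^{T}\|_F^{2}$ that is constant on the feasible set together with the linear term $-\mu\langle\mathbf{W}^{T}\mathbf{U}_n,\mathbf{V}_n\rangle$; hence minimizing the squared norm is the same as \emph{maximizing} $\langle\mathbf{W}^{T}\mathbf{U}_n,\mathbf{V}_n\rangle$, so that $\mathbf{A}=\mathbf{W}^{T}\mathbf{U}_n$ and the optimizer is $\mathbf{V}^{*}=\mathbf{B}\mathbf{C}^{T}$ exactly as claimed. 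I would therefore make this sign bookkeeping explicit at the outset and carry the correct orientation through the equality analysis; the remaining work — the contraction bound on $\mathbf{Z}$ and the uniqueness of the equality case (up to the action on $\ker\mathbf{A}$ when some $\sigma_i=0$) — is routine linear algebra.
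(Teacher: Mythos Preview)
The paper does not actually prove Theorem~1; it merely quotes the result from \cite{xie2016multispectral} and then applies it to obtain the closed-form $\mathbf{V}_n$-update. So there is no in-paper argument to compare against, and your proposal has to be judged on its own.

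Your approach is the standard one (essentially the orthogonal Procrustes / von Neumann trace-inequality argument) and is sound in outline: rewrite $\langle\mathbf{A},\mathbf{V}\rangle=\mathrm{tr}(\mathbf{D}\,\mathbf{B}^{T}\mathbf{V}\mathbf{C})$, observe that $\mathbf{Z}=\mathbf{B}^{T}\mathbf{V}\mathbf{C}$ is a contraction, bound $\sum_i\sigma_i\mathbf{Z}_{ii}$, and check the equality case. You also correctly diagnose the sign inconsistency in the paper: as written, $\mathbf{V}^{*}=\mathbf{B}\mathbf{C}^{T}$ is the \emph{maximizer} of $\langle\mathbf{A},\mathbf{V}\rangle$, not the minimizer; the discrepancy is an artifact of the paper dropping a minus sign when reducing the $\mathbf{V}_n$-subproblem from the quadratic form to the linear one in~(\ref{slover_V_denoise}). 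Your explanation of why the final update rule is nevertheless correct is exactly right.

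One genuine gap to tighten: your feasibility check $\mathbf{V}^{*}\mathbf{V}^{*T}=\mathbf{B}\mathbf{B}^{T}=\mathbf{I}$ is not valid for the \emph{condensed} SVD when $\mathrm{rank}(\mathbf{A})<m$, since then $\mathbf{B}\in\mathbb{R}^{m\times r}$ has $\mathbf{B}^{T}\mathbf{B}=\mathbf{I}_r$ but $\mathbf{B}\mathbf{B}^{T}\neq\mathbf{I}_m$. In that case $\mathbf{B}\mathbf{C}^{T}$ is not feasible at all under $\mathbf{V}\mathbf{V}^{T}=\mathbf{I}$; one must pad $\mathbf{B}$ and $\mathbf{C}$ to complete orthonormal bases (any completion works, which is the non-uniqueness you allude to at the end). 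You should state this explicitly rather than bury it in the ``up to the action on $\ker\mathbf{A}$'' remark, and also make clear which of the two constraints ($\mathbf{V}\mathbf{V}^{T}=\mathbf{I}$ vs.\ $\mathbf{V}^{T}\mathbf{V}=\mathbf{I}$) you are actually proving the result for, since the paper itself is inconsistent on this point.
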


We can then obtain the updating equation of $\mathbf{V}_n^{k+1}$ as:
\begin{equation}
\label{solver_v_denoise}
\left\{
\begin{split}
& [\mathbf{B},\mathbf{D},\mathbf{C}]= \text{svd}((\nabla_n\mathbf{X}+\mathbf{M}_n^{k}/\mu)^T\mathbf{U}_n^{k})\\
& \mathbf{V}_n^{k+1}=\mathbf{B}\mathbf{C}^T.
\end{split}
\right.
\end{equation}

\textbf{Update} $\mathbf{E}$. Extracting all items containing $\mathbf{E}$ from Eq. (\ref{denosie_lag}), we can get:
\begin{equation}
\small
\begin{split}\label{slover_E_denoise}
\mathbf{E}^{k+1}:&=\mathop{\argmin}_{\mathbf{E}} \lambda\|\mathbf{E}\|_1 + \frac{\mu}{2}\left\|\mathbf{E}-\left(\mathbf{Y}-\mathbf{X}+\frac{\mathbf{\Gamma}^{k}}{\mu}\right)\right\|^2\\
&=\mathcal{S}_{\frac{1}{\mu}}\left(\mathbf{Y}-\mathbf{X}+\frac{\mathbf{\Gamma}^{k}}{\mu}\right).
\end{split}
\end{equation}

Based on the general ADMM principle, the multipliers are further updated by the following equations:

\begin{equation}
\label{solver_m_denoise}
\small
\left\{
\begin{split}
&\mathbf{M}_n^{k+1}=\mathbf{M}_n^{k}+\mu\left(\nabla_n\mathbf{X}-\mathbf{U}_n^{k}\mathbf{V}_n^{k^T} \right),n=1,2,3\\
&\mathbf{\Gamma}^{k+1}=\mathbf{\Gamma}^{k}+\mu\left(\mathbf{Y}-\mathbf{X}-\mathbf{E}^{k}\right).
\end{split}
\right.
\end{equation}

Summarizing the aforementioned descriptions, we can get the entire ADMM algorithm, as summarized in $\mathbf{Algorithm}$ $\mathbf{1}$, for solving the model (\ref{denoise_model}).
\begin{algorithm}
\caption{E-3DTV method for HSI denoising}\label{alg_denoise}
\small
\begin{algorithmic}[1]
\renewcommand{\algorithmicrequire}{\textbf{Input:}}
\renewcommand{\algorithmicensure}{\textbf{End}}
\REQUIRE The HSI $\mathbf{\mathcal{Y}}\in\mathbb{R}^{h\times w\times s}$, unfolding to the matrix $\mathbf{Y}\in\mathbb{R}^{hw\times s}$, TV unfolding matrix rank: $r$, Regularized parameters $\tau$, Stopping criteria $\epsilon_1$,$\epsilon_2$.

\renewcommand{\algorithmicrequire}{\textbf{Initialization:}}
\renewcommand{\algorithmicensure}{\textbf{End}}
\REQUIRE Initial $\mathbf{X}$, $\mathbf{E}$, $\mathbf{U}_n$, $\mathbf{V}_n$, $\mathbf{M}_{n}$, $\mathbf{M}_{4}$.

\WHILE {not converge}
\STATE
 Update $\mathbf{X}$, $\mathbf{E}$  by Eq.(\ref{slover_x_denoise}) and (\ref{slover_E_denoise}), respectively.
\STATE
Update $\mathbf{U}_{n}$, $\mathbf{V}_{n}$ by Eq.(\ref{slover_u_denoise}) and (\ref{solver_v_denoise}), respectively .
\STATE
Update the multipliers $\mathbf{M}_{n}$, $\mathbf{\Gamma}$ by Eq. (\ref{solver_m_denoise}).
\STATE
Check the convergence conditions\\
$\quad \Vert\mathbf{Y}-\mathbf{X}-\mathbf{E}\Vert_2^F/\Vert\mathbf{Y}\Vert_2^F\leq\epsilon_1$\\
$\quad \Vert\nabla_{n}\mathbf{X}-\mathbf{U}_n\mathbf{V}_n^T\Vert_2^F/\Vert\mathbf{Y}\Vert_2^F\leq\epsilon_2,n=1,2,3$\\
\ENDWHILE

\renewcommand{\algorithmicrequire}{\textbf{Output:}}
\renewcommand{\algorithmicensure}{\textbf{End}}
\REQUIRE  $\mbox{Fold}(\mathbf{X})\in\mathbb{R}^{h\times w\times s}$.
\end{algorithmic}
\end{algorithm}

\section{Experimental Results on HSI denoising}
In this section, extensive experiments are presented to evaluate the performance of the proposed method. We applied our proposed E-3DTV prior to HSI denoising in comparison with 8 state-of-the-art methods.
In order to give an overall evaluation, three quantitative quality indices are employed: PSNR, SSIM\cite{Wang2004Image}, ERGAS\cite{Wald2010Data}. PSNR and SSIM are two conventional spatial-based metrics, while ERGAS is spectral-based evaluation measure. The larger PSNR and SSIM values are, and the smaller ERGAS values is, the better quality the restored images tends to be of.

To demonstrate the effectiveness of the proposed algorithm, we compared our denoising results with 8 recently developed state-of-the-art denoising methods, including non-local transform domain filter for volumetric data (BM4D) \cite{Maggioni2013Nonlocal}, TDL \footnote{\url{http://gr.xjtu.edu.cn/c/document_library/get_file?folderId=1766524&name=DLFE-38410.zip}} \cite{Peng2014Decomposable} , NNM \cite{Wright2009Robust}, WNNM \cite{Gu2014Weighted}, LRMR \cite{ZHY2014HSI}, WSNM \cite{XY2016Hyperspectral}, LRTV \cite{He2016} and LRTDTV \cite{Wang2017Hyperspectral}. All the involved parameters in these competing methods are fine-tuned by default settings or following the rules in their papers to guarantee their possibly optimal performance. Before the experiments, the gray value of data were rescaled to [0,1].  All the experiments were performed using MATLAB (R2014a) on Windows 7 OS with dual-core Inter 3.60-GHz processor and 64 GB of RAM.

\begin{figure*}[!]
\centering
\includegraphics[scale=0.55]{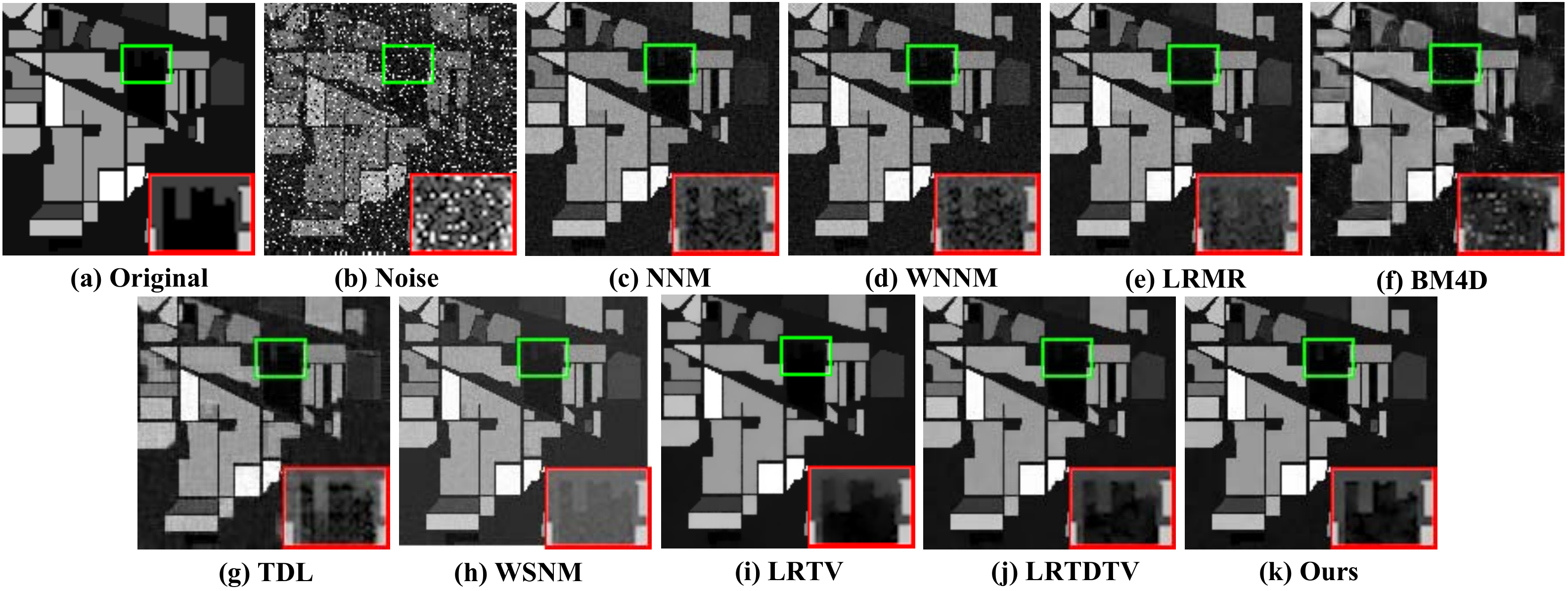}
\caption{Restoration performance of all competing methods on the band 220 of the ARIVIS Indian Pines HSI in Case c. The demarcated area is enlarged in the right bottom corner for better visualization. The figure is better seen by zooming on a computer screen.}
\label{fig_indian220}\vspace{1mm}
\end{figure*}

\begin{figure*}[!]
\centering
\includegraphics[width=1\linewidth]{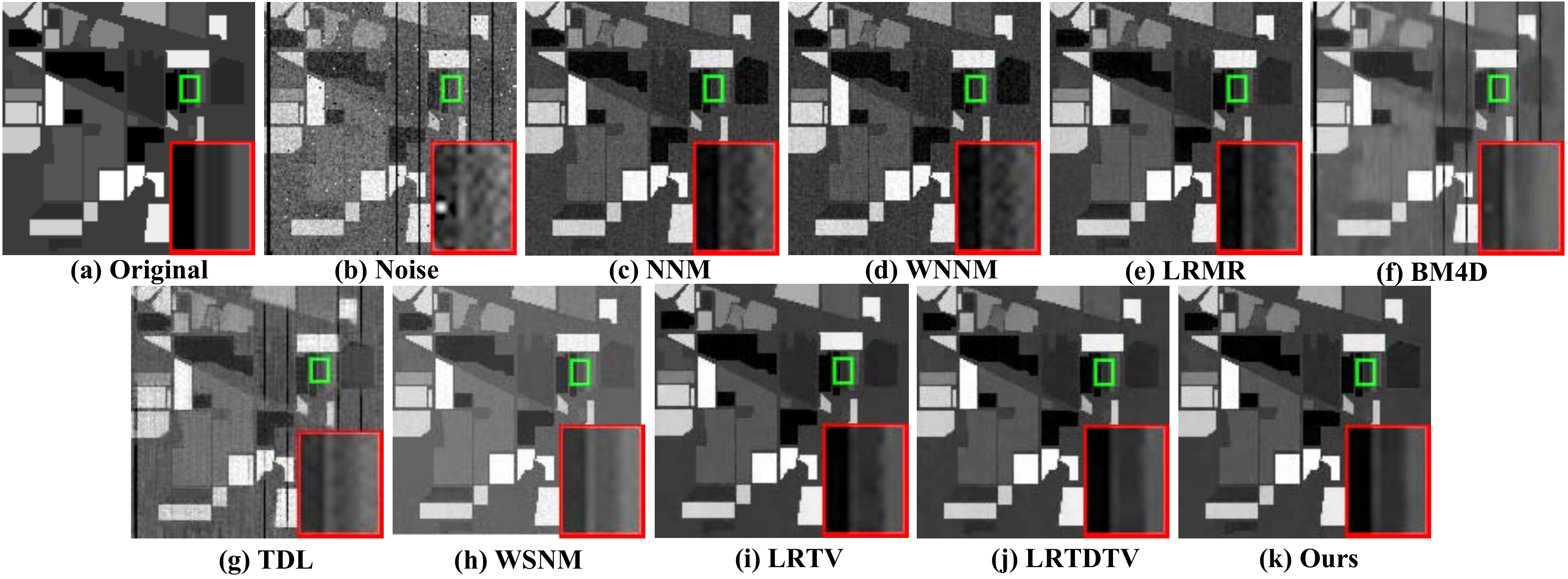}
\caption{Restoration performance of all competing methods on the band 120 of the ARIVIS Indian Pines HSI in case e.}
\label{fig_indian120}\vspace{-0mm}
\end{figure*}

\begin{figure*}[!]
\centering
\includegraphics[width=1\linewidth]{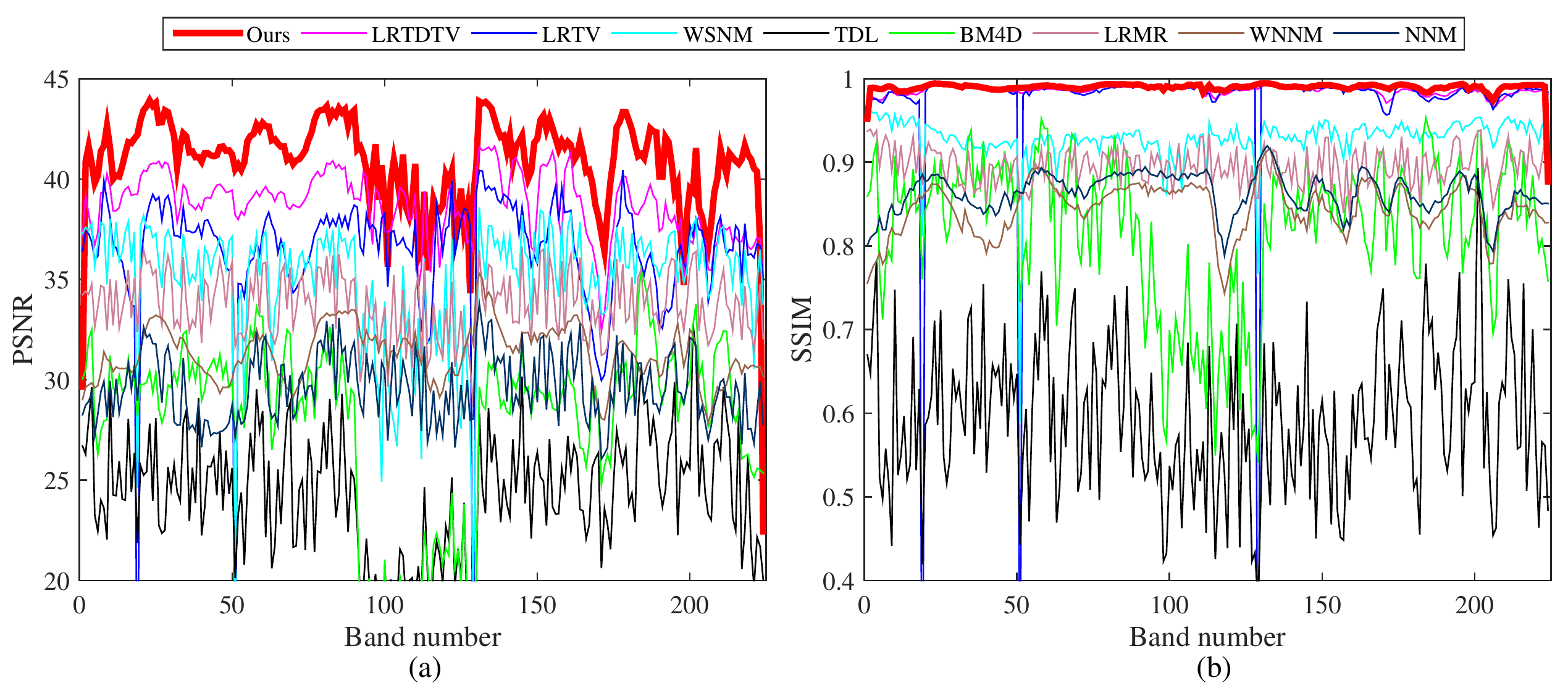}
\caption{The quality indices distribution of all competing methods calculated on the ARIVIS Indian Pines HSI in Case e. The curve of the bottom figure were enlarged in the top figure for better visualization on a computer screen. }
\label{fig_indian_qua}
\vspace{1mm}
\end{figure*}

\begin{figure*}[!]
\centering
\includegraphics[width=1\linewidth]{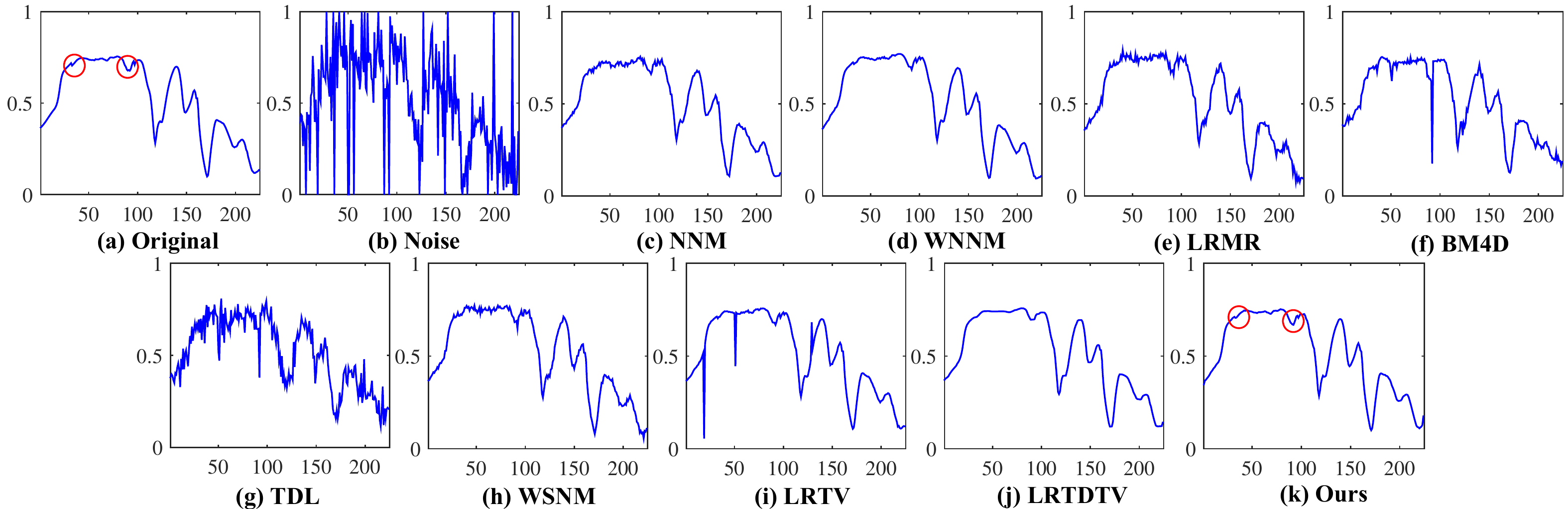}
\caption{ The spectral signatures curves of pixel (50,50) restored by all the compared methods under ARIVIS Indian Pines HSI in Case e. The detail of curve were marked by red circle for easy observation. }
\label{fig_indian_sign}
\vspace{1mm}
\end{figure*}

\begin{table*}[!]
\footnotesize
\caption{Quantitative comparison of all competing methods under different levels of noises on Indian Pines dataset. The best result for each series of experiments is highlighted in bold.}
\centering
\begin{tabular}{|c|c|c|c|c|c|c|c|c|c|c|c|}
\Xhline{2\arrayrulewidth}
$\text{Noise case}$ &$\text{Index}$ &$\text{Noise}$ &$\text{NNM}$ &$\text{WNNM} $ &$\text{LRMR} $ &$\text{BM4D}$&$\text{TDL} $ &$\text{WSNM}$ &$\text{LRTV} $ &$\text{LRTDTV}$ &$\text{Ours}$\\
\Xhline{3\arrayrulewidth}
\multirow{3}{*}{Case a} & PSNR &19.99 &30.97 &32.58 &36.20 &38.44 &38.05 &37.32 &38.68 &40.76&\textbf{42.33} \\
\cline{2-12}
& SSIM &0.3672 &0.8727 &0.8420 &0.9311 &0.9763 &0.9674 &0.9453 &0.9853 &0.9804 &\textbf{0.9910}\\
\cline{2-12}
& ERGAS &233.99 &69.02 &57.65 &36.85 &29.04 &30.72 &32.37 &28.47  &\textbf{23.02} &24.32\\
\Xhline{2\arrayrulewidth}
\multirow{3}{*}{Case b} & PSNR &19.34 &30.81 &32.46 &35.67 &35.10 &33.22 &36.12 &38.04 &40.54 &\textbf{41.97}\\
\cline{2-12}
& SSIM &0.3592 &0.8715 &0.8415 &0.9291 &0.9391 &0.8743 &0.9402 &0.9818&0.9895 &\textbf{0.9910}\\
\cline{2-12}
& ERGAS &257.88 &70.06 &58.39 &39.84 &110.40 &112.35 &44.89 &49.18&\textbf{23.44}&24.78\\
\Xhline{2\arrayrulewidth}
\multirow{3}{*}{Case c} & PSNR &13.07 &31.61 &32.36 &36.40 &28.59 &27.50 &38.14 &39.54&41.08 &\textbf{43.13}\\
\cline{2-12}
& SSIM &0.1778 &0.8889 &0.8786 &0.9345 &0.8401 &0.9076 &0.9551 &0.9866 &0.9910 &\textbf{0.9928}\\
\cline{2-12}
& ERGAS &520.53 &64.36 &59.71 &36.04 &90.29 &102.08 &29.52 &35.22 &\textbf{21.98} &22.55\\
\Xhline{2\arrayrulewidth}
\multirow{3}{*}{Case d} & PSNR &12.92 &31.45 &32.29 &35.76 &27.02 &26.54 &36.63 &38.75 &40.72&\textbf{42.68}\\
\cline{2-12}
& SSIM &0.1748 &0.8878 &0.8777 &0.9316 &0.8073 &0.8365 &0.9485 &0.9826 &0.9906 &\textbf{0.9926}\\
\cline{2-12}
& ERGAS &529.82 &65.45 &60.10 &39.99 &128.11 &120.54 &46.32 &55.44 &\textbf{22.90} &23.26\\
\Xhline{2\arrayrulewidth}
\multirow{3}{*}{Case e} & PSNR &13.80 &29.62 &31.33 &33.72 &27.95 &24.34 &35.02 &36.54&38.83&\textbf{40.80}\\
\cline{2-12}
& SSIM &0.2038 &0.8633 &0.8445 &0.8951 &0.8201 &0.5931 &0.9285 &0.9742 &0.9859&\textbf{0.9894}\\
\cline{2-12}
& ERGAS &500.68 &81.16 &66.39 &50.16 &121.95 &158.75 &51.33 &72.52 &28.66 & \textbf{28.37} \\
\Xhline{2\arrayrulewidth}
\multirow{3}{*}{Case f} & PSNR &13.73 &29.54 &29.97 &33.42 &27.53 &23.34 &33.88 &36.35 &38.63 &\textbf{40.54}\\
\cline{2-12}
& SSIM &0.2022 &0.8612 &0.8431 &0.8918 &0.8060 &0.5583 &0.9261 &0.9736 &0.9852 &\textbf{0.9888}\\
\cline{2-12}
& ERGAS &504.37 &82.18 &82.48 &52.62 &127.28 &174.28 &53.29 &72.05 &29.82 &\textbf{28.87}\\
\Xhline{2\arrayrulewidth}
\end{tabular}
\label{table:assement1}
\end{table*}

\subsection{Synthetic Data Experiments Setting }
\subsubsection{Experiments Setting}
We adopted two synthetic HSIs data. One was considered by \cite{He2016}, generated from the Indian Pines dataset\footnote{\url{https://engineering.purdue.edu/~biehl/MultiSpec/hyperspectral.html}}. The size of the Indian pines synthetic HSI is $145\times 145\times 224$, with evident local smoothness property. The other one is ARIVIS DCMall HSI dataset, whose size is  $200\times 200\times 160$, with complex structure and texture information. To simulate noisy HSI data, we added six different types of noises to both original HSI data. The details are listed as follows:

\textbf{Case a}: Zero-mean Gaussian noise is added to each band. The variances of the Gaussian noise is 0.1.

\textbf{Case b}: Gaussian noise is added to each band just as Case a. In addition, some deadlines are added from band 91 to band 130, with the number of stripes randomly selected from 3 to 10, and the width of the stripes randomly generated from 1 to 3.

\textbf{Case c}: The mixture of zero-mean Gaussian noise and impulse noise are added to each band. The variance of the Gaussian noise and percentage of the impulse noise are set as 0.075 and 0.15, respectively.

\textbf{Case d}: The mixture of Gaussian and impulse noises are added like Case c, and the deadlines are added as Case b.

\textbf{Case e}: Zero-mean Gaussian noises with different variances and impulse noise with different percentage are added to each band, with the variance value being randomly selected from 0 to 0.2, and percentage value being randomly selected from 0 to 0.2. In addition, the deadlines are added to some bands as Case d.

\textbf{Case f}: The mixture of Gaussian noise, impulse noise, and deadlines are added as Case e did. In addition, some stripes are added from band 161 to band 190 in Indian pines, and some stripes are added from band 141 to band 160 in DC mall,  with the number of stripes being randomly selected from 20 to 40.

As for parameter settings, there are three parameters in the model that need to be set. They are the desired rank $r$, the noise coefficient $\lambda$, and the sparsity basis coefficient $\tau$. Since we used the $\ell_1$ norm to model the noise, inspired by the parameter settings of the RPCA, we set $\lambda=\frac{1}{\sqrt{hw}}$, and set the sparse basis coefficient as $\tau=C\times \sqrt{hw}$. Note that we should manual adjust the $C$ value, because  the sparse basis coefficient and data is relevant, different data have different coefficient. In our synthetic data experiments, the $C$ value of Indian pines was set as 0.004, and that of DC mall was set as 0.0004. The rank is set as 13 in all Indian pines experiments, and 6 in DC mall experiments.

\begin{table*}[!]
\footnotesize
\caption{Quantitative comparison of all competing methods under different levels of noises on DCMall dataset. }
\centering
\begin{tabular}{|c|c|c|c|c|c|c|c|c|c|c|c|}
\Xhline{2\arrayrulewidth}
$\text{Noise case}$ &$\text{Index}$ &$\text{Noise}$ &$\text{NNM}$ &$\text{WNNM} $ &$\text{LRMR} $ &$\text{BM4D}$&$\text{TDL} $ &$\text{WSNM}$ &$\text{LRTV} $ &$\text{LRTDTV}$ &$\text{Ours}$\\
\Xhline{3\arrayrulewidth}
\multirow{3}{*}{Case a} & PSNR &20.00 &32.03 &33.00 &33.77 &32.39 &35.53 &31.68 &35.16 &35.23&\textbf{36.56} \\
\cline{2-12}
& SSIM &0.5147 &0.9584 &0.9569 &0.9584 &0.9308 &0.9710 &0.9429 &0.9639 &0.9636 &\textbf{0.9761}\\
\cline{2-12}
& ERGAS &375.83 &90.87 &83.40 &74.33 &86.67 &60.37 &95.76 &62.98  &61.83 &\textbf{55.74}\\
\Xhline{2\arrayrulewidth}
\multirow{3}{*}{Case b} & PSNR &19.87 &31.87 &32.93 &33.55 &31.49 &34.51 &31.63 &34.18 &34.33 &\textbf{35.57}\\
\cline{2-12}
& SSIM &0.5092 &0.9581 &0.9565 &0.9572 &0.9209 &0.9650 &0.9423 &0.9531&0.9575 &\textbf{0.9698}\\
\cline{2-12}
& ERGAS &381.50 &92.38 &83.92 &76.63 &101.86 &71.58 &96.51 &76.38&68.90&\textbf{61.19}\\
\Xhline{2\arrayrulewidth}
\multirow{3}{*}{Case c} & PSNR &12.39 &32.70 &33.49 &31.77 &23.63 &23.73 &31.92 &34.46&35.41&\textbf{36.35}\\
\cline{2-12}
& SSIM &0.2207 &0.9651 &0.9632 &0.9390 &0.6869 &0.8010 &0.9465 &0.9563 &0.9672 &\textbf{0.9751}\\
\cline{2-12}
& ERGAS &925.41 &84.53 &79.64 &94.28 &251.78 &250.93 &93.28 &91.02 &61.54 &\textbf{56.02}\\
\Xhline{2\arrayrulewidth}
\multirow{3}{*}{Case d} & PSNR &12.37 &32.52 &33.40 &31.61 &23.55 &23.80 &31.87 &34.33&35.23&\textbf{36.08}\\
\cline{2-12}
& SSIM &0.2187 &0.9646 &0.9627 &0.9369 &0.6814 &0.8003 &0.9461 &0.9548 &0.9658 &\textbf{0.9732}\\
\cline{2-12}
& ERGAS &926.52 &86.13 &80.39 &96.41 &252.44 &246.69 &94.05 &97.59 &63.98 &\textbf{58.49}\\
\Xhline{2\arrayrulewidth}
\multirow{3}{*}{Case e} & PSNR &13.70 &31.36 &32.72 &30.83 &24.53 &25.64 &31.57 &33.37&34.25&\textbf{35.42}\\
\cline{2-12}
& SSIM &0.2698 &0.9559 &0.9551 &0.9256 &0.7094 &0.8420 &0.9413 &0.9460 &0.9580 &\textbf{0.9694}\\
\cline{2-12}
& ERGAS &793.92 &98.30 &85.48 &106.43 &228.07&203.72 &96.44 &99.25 &70.40 &\textbf{62.27}\\
\Xhline{2\arrayrulewidth}
\multirow{3}{*}{Case f} & PSNR &13.67 &31.45 &32.76 &30.80 &24.52 &25.56 &31.47 &33.43 &34.34&\textbf{35.23}\\
\cline{2-12}
& SSIM &0.2721 &0.9566 &0.9561 &0.9252 &0.7100 &0.8383 &0.9410 &0.9446 &0.9575 &\textbf{0.9672}\\
\cline{2-12}
& ERGAS &812.13 &98.20 &85.78 &108.08 &230.84 &209.68 &98.38&115.93 &71.13 &\textbf{63.51}\\
\Xhline{2\arrayrulewidth}
\end{tabular}
\label{table:assement2}
\end{table*}

\begin{figure*}[!]
\centering
\includegraphics[width=1\linewidth]{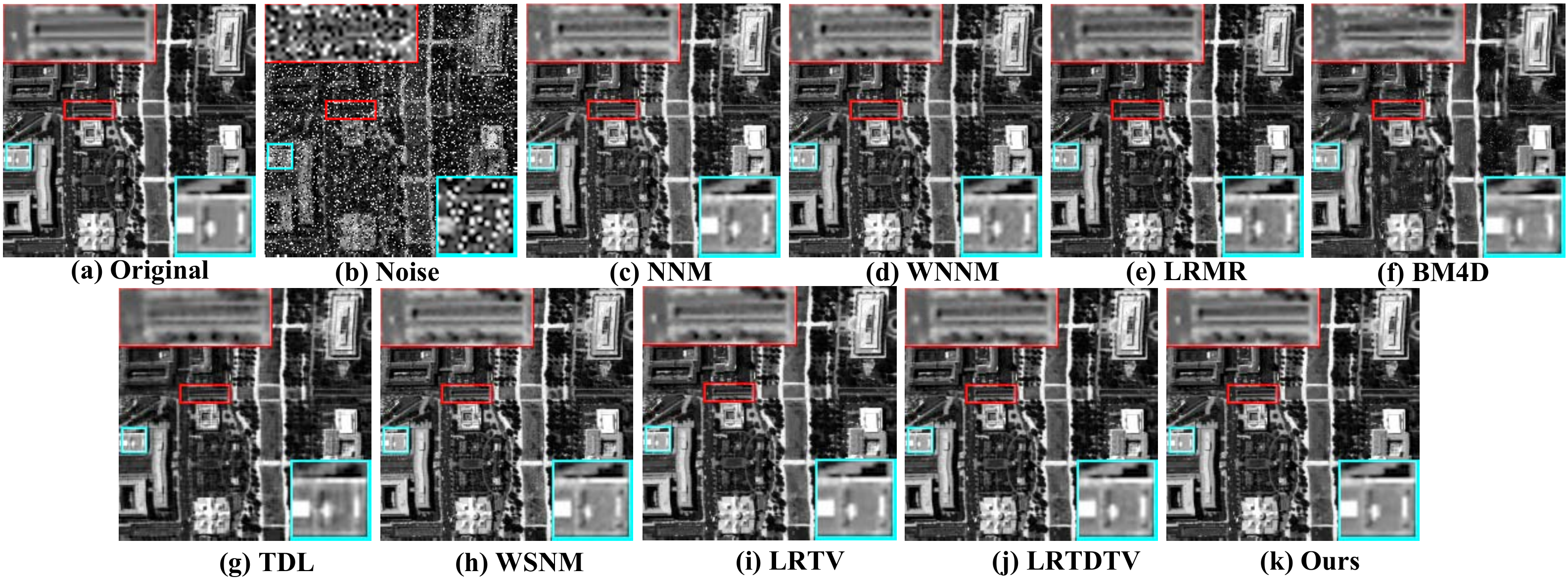}
\caption{The restoration performance of all competing methods on the band 30 of the DC mall HSI in Case c. }
\label{fig_dcmall30}
\end{figure*}

\begin{figure*}[!]
\centering
\includegraphics[width=1\linewidth]{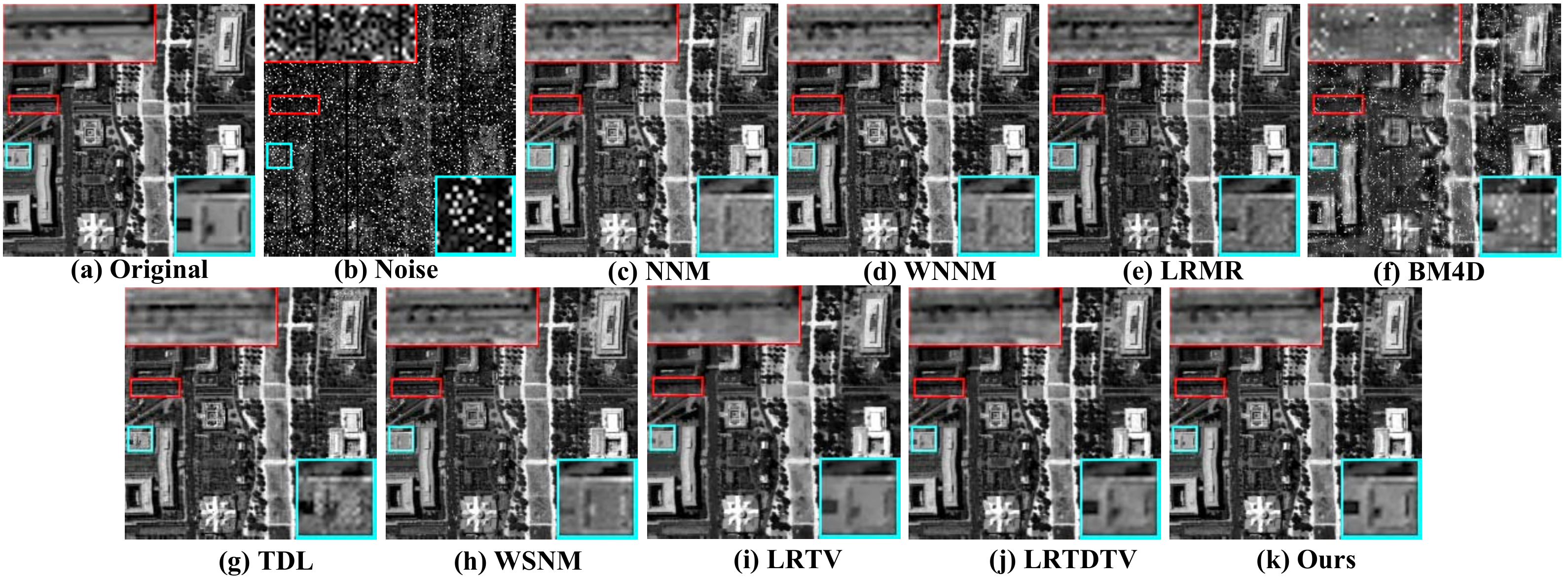}
\caption{The restoration performance of all competing methods on the band 120 of the DC mall in Case e.}
\label{fig_dcmall120}
\end{figure*}

\begin{figure*}[!]
\centering
\includegraphics[width=1\linewidth]{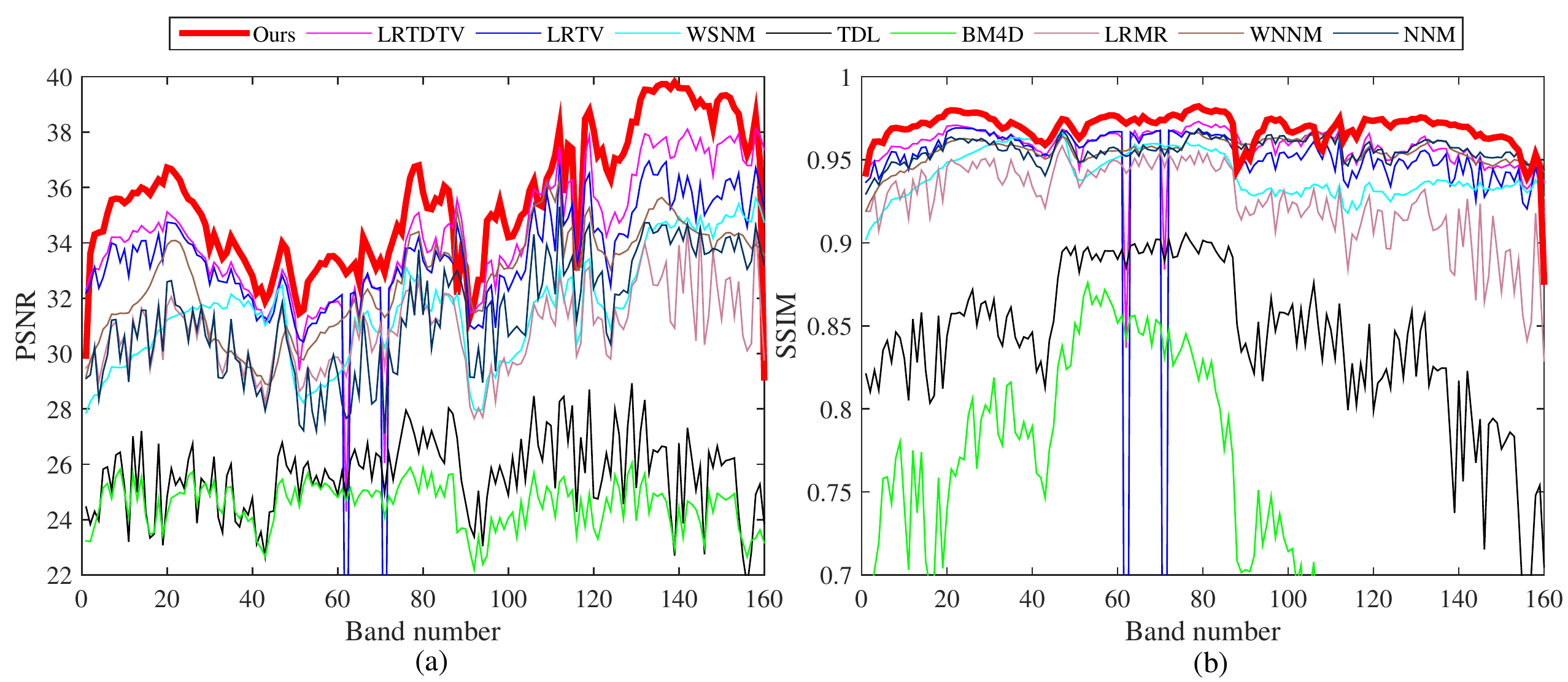}
\caption{The quality indices distribution of all competing methods  of the DC mall HSI in Case e. }
\label{fig_dcmall_qua}
\end{figure*}

\begin{figure*}[!]
\centering
\includegraphics[scale=0.55]{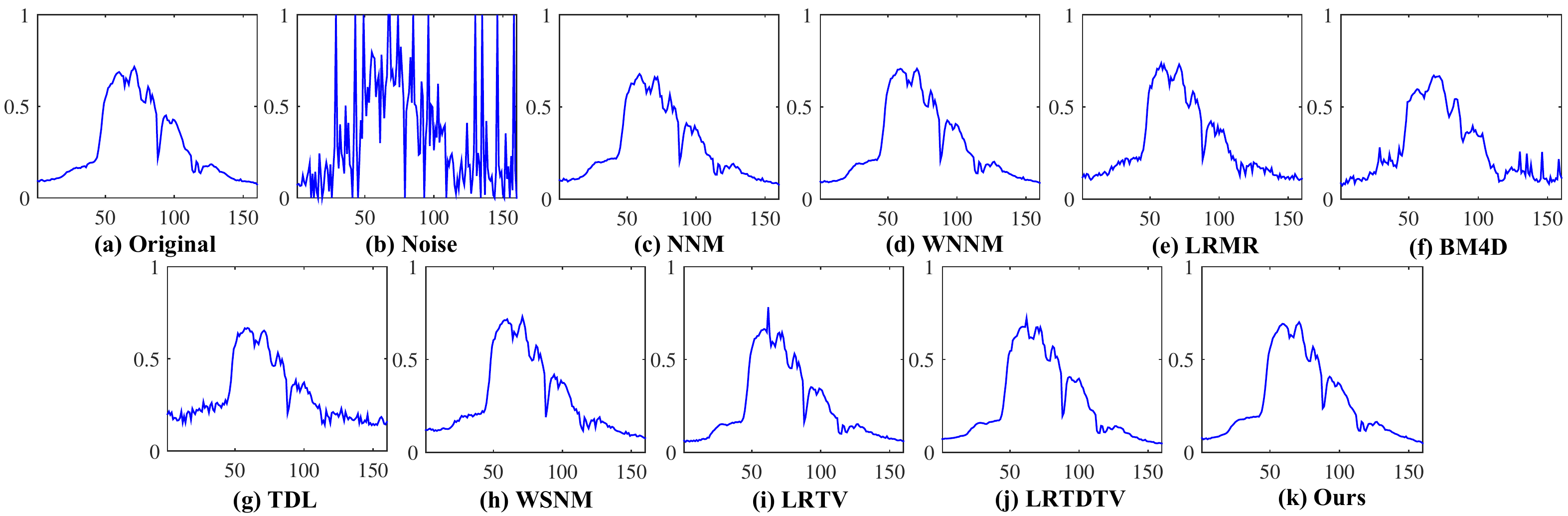}
\caption{ The spectral signatures curves of pixel (100,100) restored by all the compared methods under DC mall HSI in Case e. }
\label{fig_dcmall_sign}
\end{figure*}
\subsubsection{The Synthetic Indian pines HSI experiments}

In terms of visual quality, two representative bands of restored HSIs in two typical Case c and Case e  obtained by different methods are shown.  Fig. \ref{fig_indian220} shows the restoration
results of band 220 under Case c,  and Fig. \ref{fig_indian120} shows the restoration results of band 120 under Case e. Compared with other methods, it can be easily observed that the proposed method recovers more meaningful details including textures and edges, and meanwhile produces better reconstruction results in smooth regions with higher PSNR and SSIM values. And the mean overall quantitative assessment results by all competing denoising methods are listed in Table. \ref{table:assement1}, and the distribution of quality indices under Case e is depicted in Fig. \ref{fig_indian_qua} to show the each band's restoration performance. In order to clearly depict the top of curve in the (a-2) and (b-2) pictures, we enlarge this area in the (a-1) and (b-1) figures. From the figure, we can easily see that the evaluation index curve of E-3DTV  is higher than other methods, which verifies the better recovery performance of the E-3DTV in almost all bands.

To further compare the performance of all competing methods, it would be necessary to show the spectral signatures before and after restoration. As such, Fig. \ref{fig_indian_sign} shows one simple example namely the spectral signatures of pixel (50, 50) in Case e. It can be clearly seen that the LRTDTV as well as E-3DTV  methods show the better restoration results than other methods. Especially, the E-3DTV method can preserve some detailed changes of spectral curve. Combined with the quality indices values shown in Table \ref{table:assement1}, our proposed E-3DTV method attains the best restoration results among all compared methods.

\subsubsection{The Synthetic DC Mall HSI Experiments}

\begin{figure*}[!]
\centering
\includegraphics[width=0.97\linewidth]{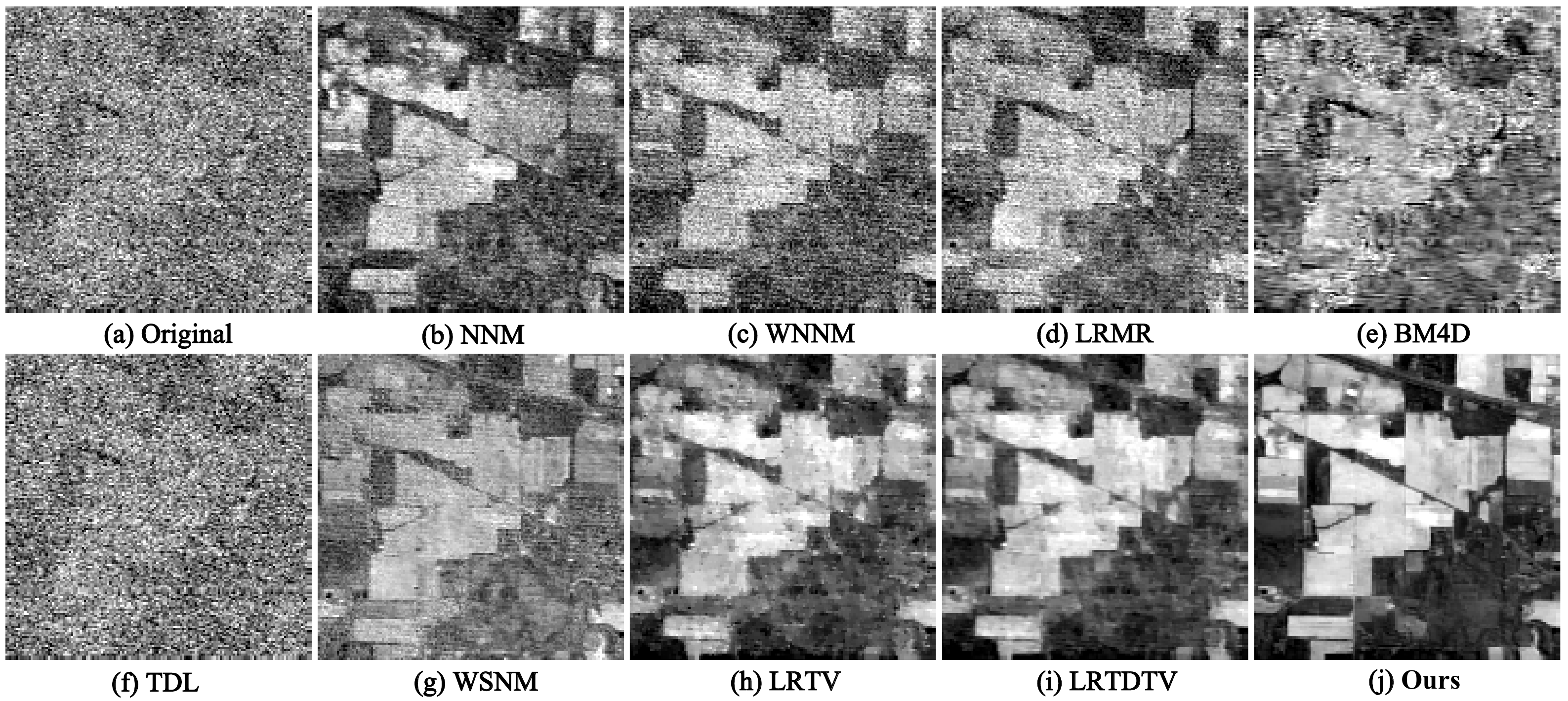}
\caption{The restoration performance of all competing methods on the band 108 of real Indian Pines HSI. }
\label{fig_indian108}
\end{figure*}

\begin{figure*}[!]
\centering
\includegraphics[width=0.97\linewidth]{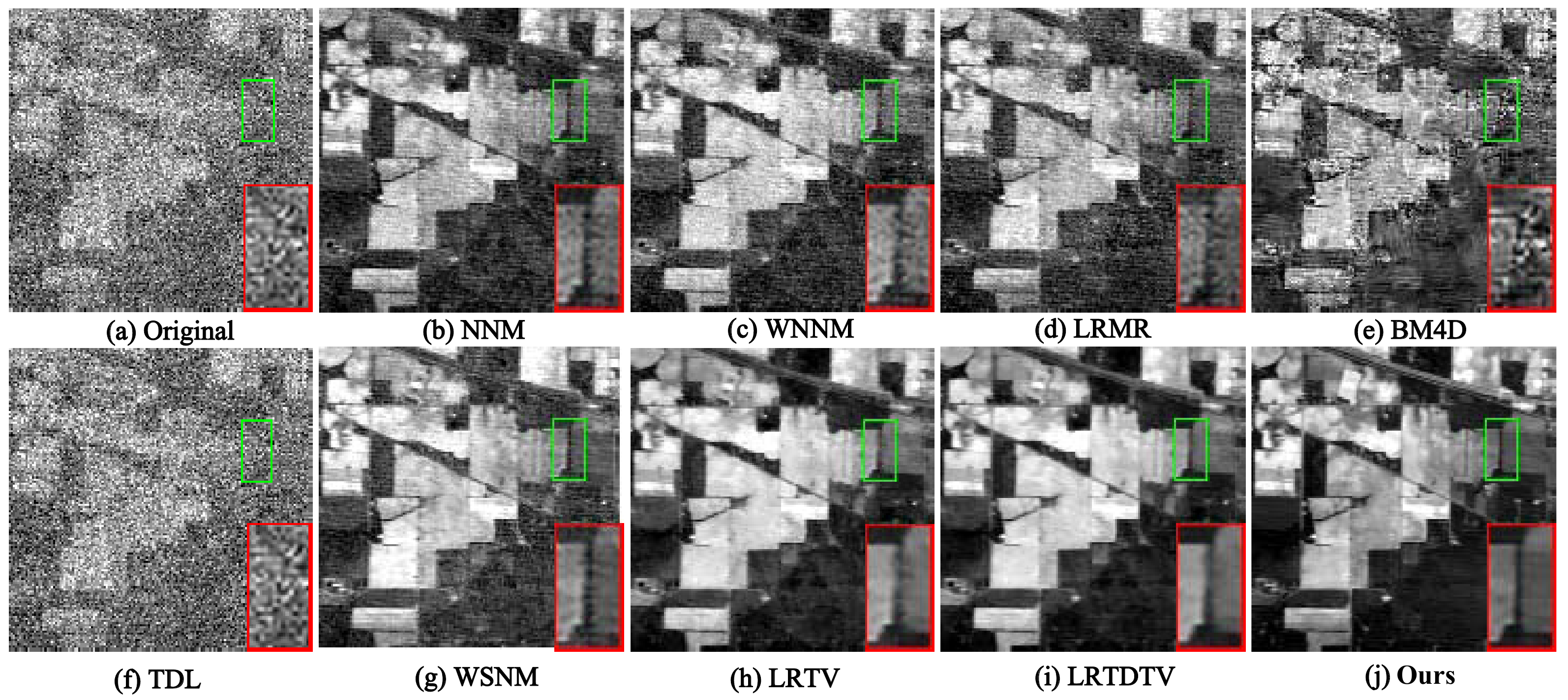}
\caption{The restoration performance of all competing methods on the band 220 of real Indian Pines HSI. }
\label{fig_indian220}
\end{figure*}

\begin{figure*}[!]
\centering
\includegraphics[width=0.97\linewidth]{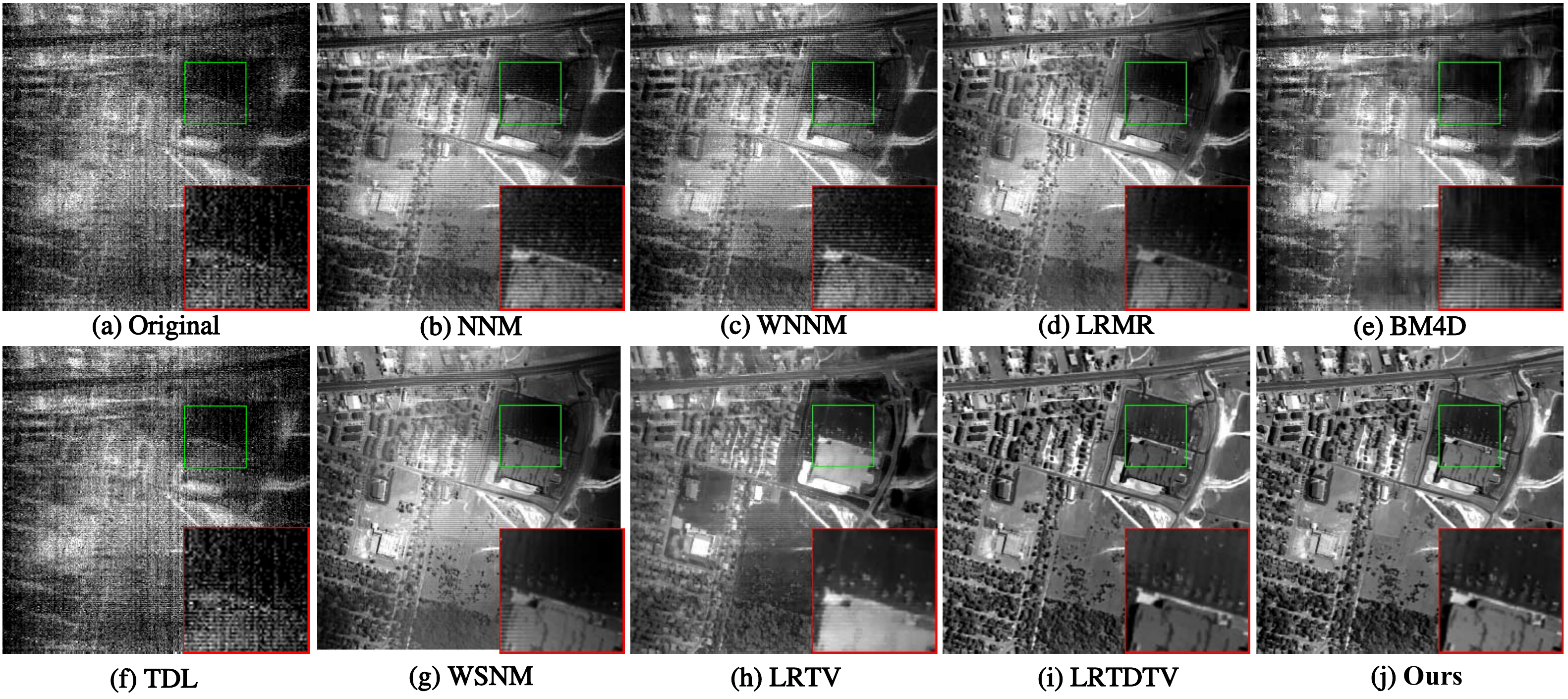}
\caption{The restoration performance of all competing methods on the band 108 of real urban HSI.}
\label{fig_urban_108}
\end{figure*}

\begin{figure*}[!]
\centering
\includegraphics[width=0.97\linewidth]{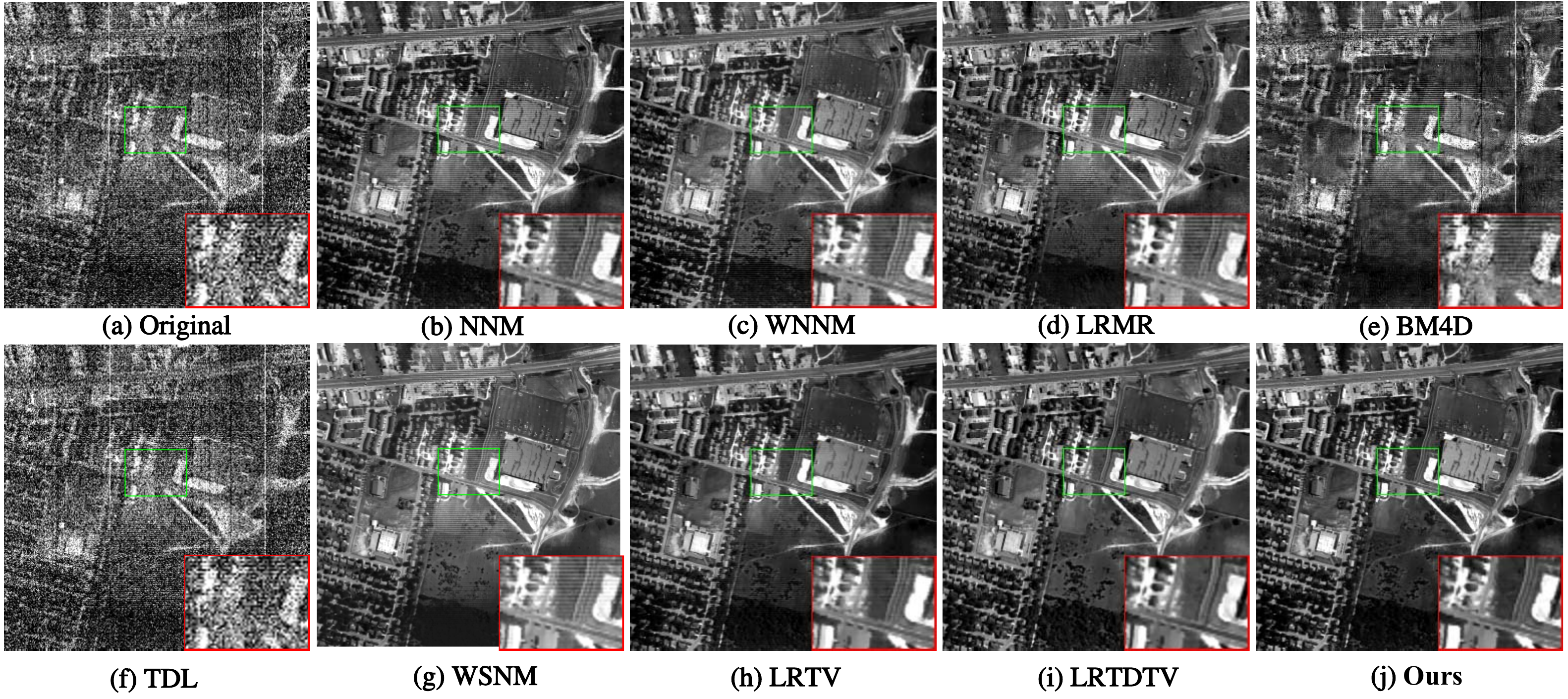}
\caption{The restoration performance of all competing methods on the band 207 of real urban HSI.}
\label{fig_urban_207}
\end{figure*}

\begin{figure*}[!]
\centering
\includegraphics[width=0.985\linewidth]{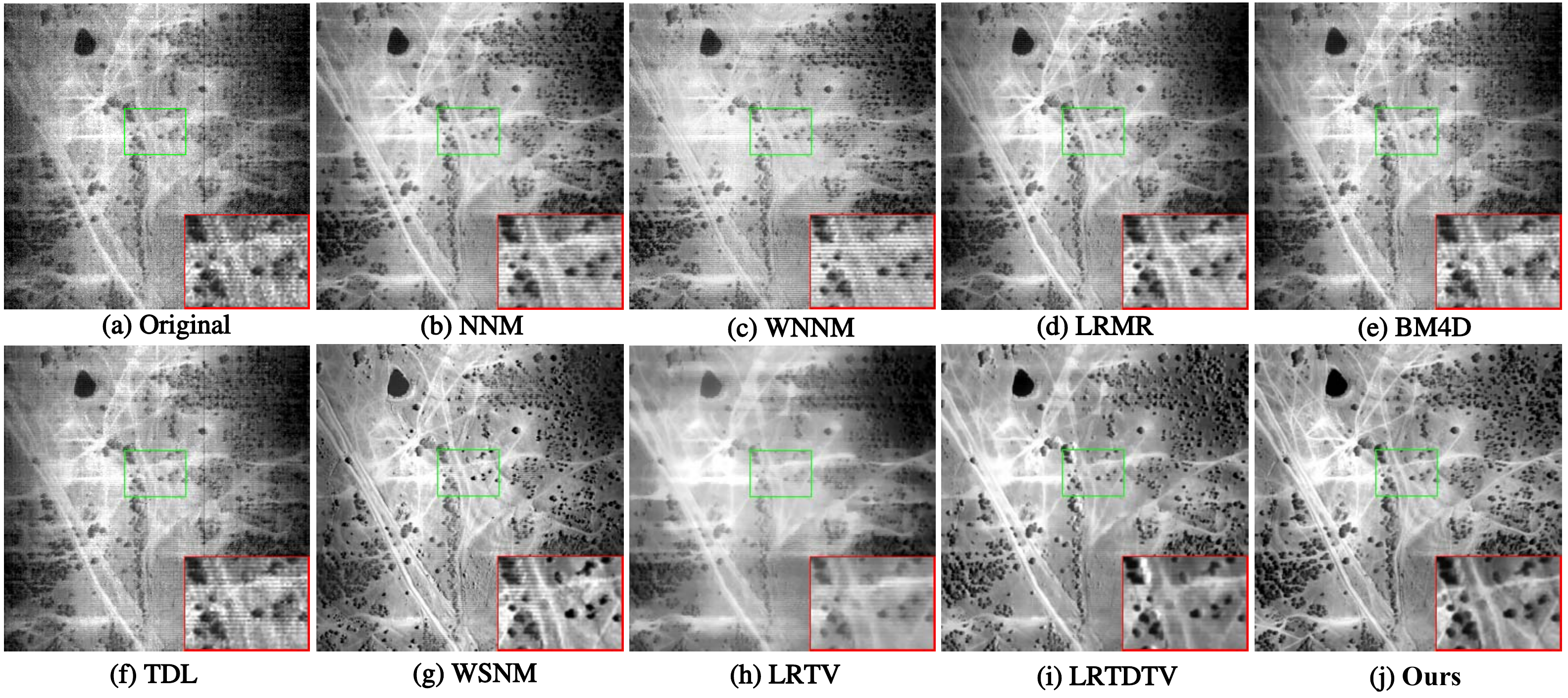}
\caption{The restoration performance of all competing methods on the band 104 of real terrian HSI. }
\label{fig_teerian_104}
\end{figure*}

\begin{figure*}[!]
\centering
\includegraphics[width=0.985\linewidth]{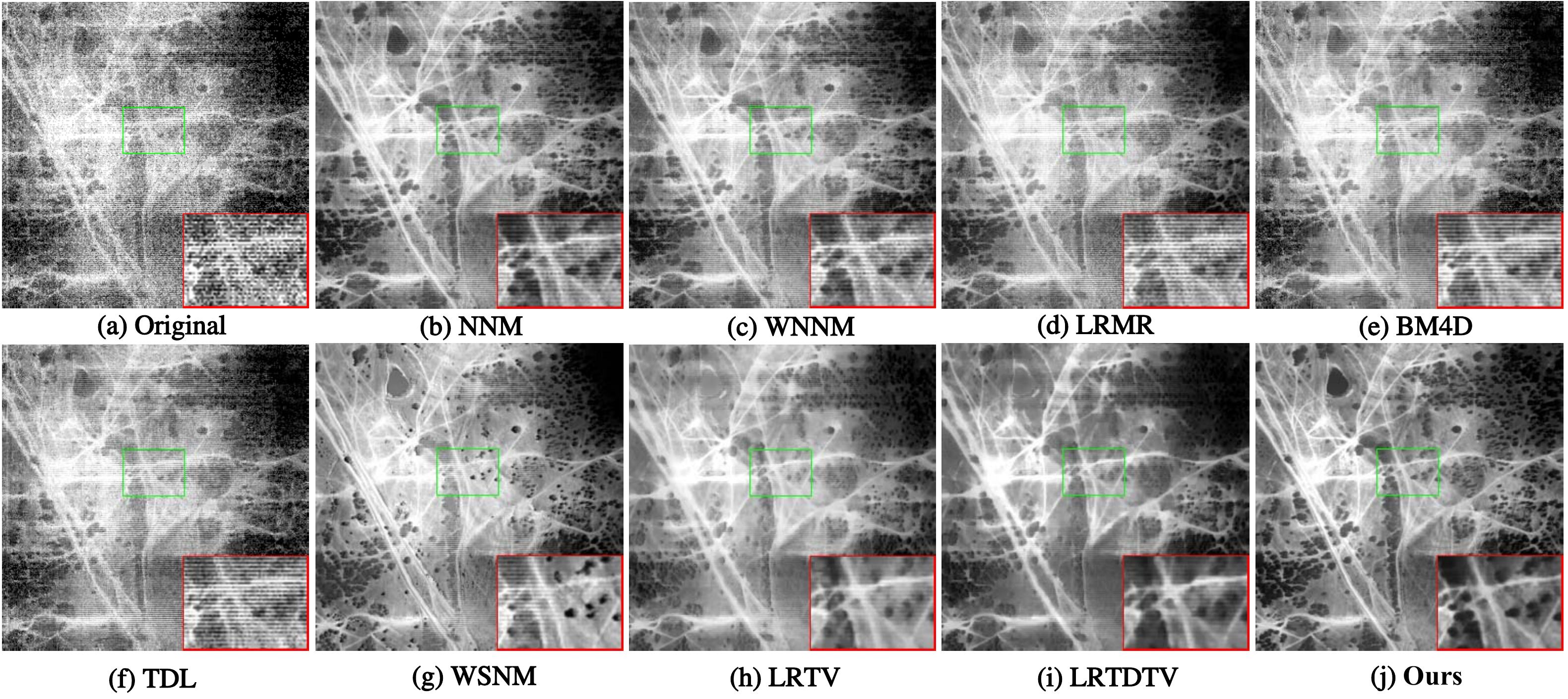}
\caption{The restoration performance of all competing methods on the band 207 of real terrian HSI.}
\label{fig_teerian_151}
\end{figure*}

\begin{figure*}[!]
\centering
\includegraphics[width=1\linewidth]{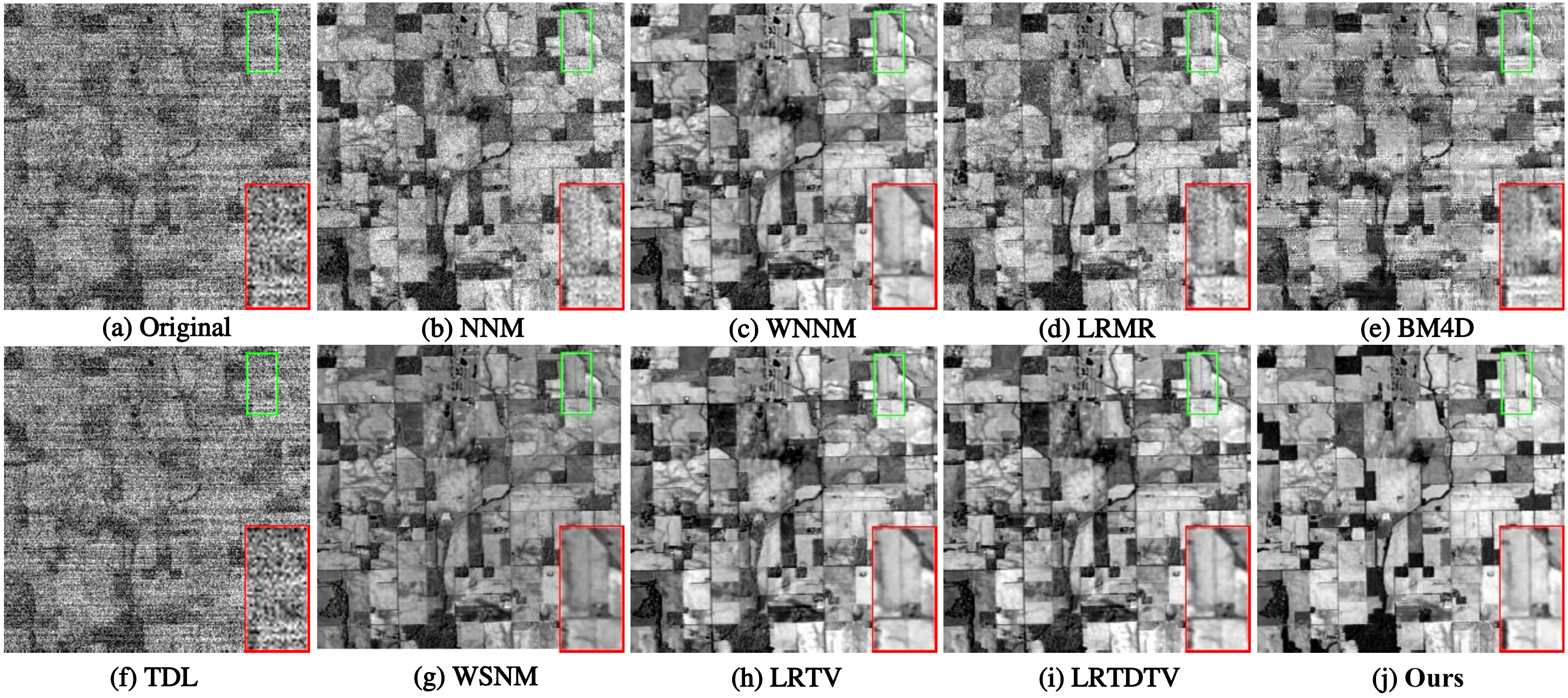}
\caption{The restoration performance of all competing methods on the band 108 of real lowal altitude HSI.}
\label{fig_lowal_109}
\end{figure*}

\begin{figure*}[!]
\centering
\includegraphics[width=1\linewidth]{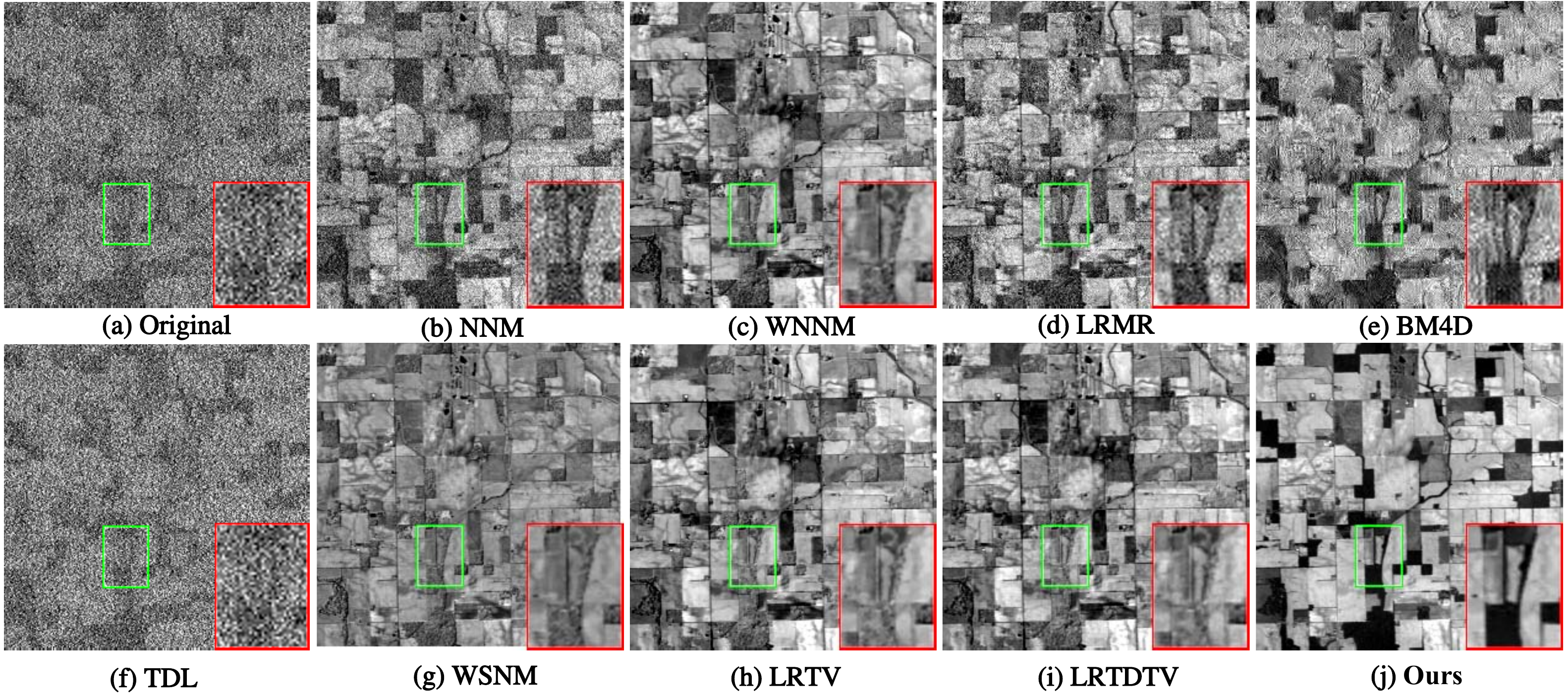}
\caption{The restoration performance of all competing methods on the band 207 of real lowal altitude HSI.}
\label{fig_lowal_162}
\end{figure*}

We then show the performance of all competing methods on the DC mall data, which have more complex structure than previous one. We also provide the two visual restoration images in  Figs. \ref{fig_dcmall30} and \ref{fig_dcmall120}, respectively. The average of all quantitative assessment results are provided in Table. \ref{table:assement2}, the distribution of quality indices under Case e is shown in Fig. \ref{fig_dcmall_qua}, and the signature restoration of  pixel (100, 100) in case e are provided in Fig. \ref{fig_dcmall_sign}.

From the table and figures, it can be easily seen that our proposed E-3DTV method is also of superior performance among other competing methods both in quantity and visualization. The effectiveness of the proposed E-3DTV regularizer is further substantiated.

\subsection{Real Data Experiments}
Four typical real-world HSI datasets were further used for experiments. These datasets include ARIVIS Indian\footnote{\url{https://engineering.purdue.edu/~biehl/MultiSpec/hyperspectral.html}}, ARIVIS Low altitude\footnote{\url{https://engineering.purdue.edu/~biehl/MultiSpec/hyperspectral.html}}, HYDICE Urbanpart\footnote{\url{http://www.tec.army.mil/hypercube}} and Terrain\footnote{\url{http://www.tec.army.mil/hypercube}}. These real HSIs all contain serious pollutions including water pollution, deadlines, strips and sparse noise, and their shapes and structures are very different. It is thus a big challenge to recover the clean HSIs from them. In order to comprehensively display the restoration effects of all methods, we select two representative bands to show the visual restoration effects for all competing methods.

From Figs. \ref{fig_indian108}-\ref{fig_lowal_162}, it can be clearly observed that the methods based on utilizing the local smoothness prior and low rank prior such as LRTV and LRTDTV, together with the proposed E-3DTV method, perform better than other competing methods.
Specifically, the E-3DTV method can better recover the overall structure information of HSI and suppress the noises as shown in Figs. \ref{fig_indian108}, and get the best restoration result in preserving the local edge/texture information. The E-3DTV method can also be seen to better remove the structure noises (such as deadlines and stripes) and preserve the local texture of the HSIs from the Figs. \ref{fig_urban_108} to \ref{fig_teerian_151}. The proposed method can better remove the artificial shadows and overlap to effectively avoid false error information caused by low-rank frames as depicted in Fig. \ref{fig_lowal_162}.

In all, we can see that the proposed E-3DTV method clearly outperforms all other competing methods in both its better noise suppressing effect and its finer meaningful structure preservation capabilities. This substitutes the usefulness of the proposed E-3DTV regularization on such task in real complicated cases.

\section{E-3DTV Method for HSI Compressed Sensing}
\subsection{The E-3DTV Model}
The compressed sensing task is to reconstruct the HSI  from a small number of compressed measurements $y$ with global information. Then the compressed measurements $y\in\mathbb{R}^{m}$ can be obtained by
\begin{equation}
\label{cs_function}
y=\mathbf{\Psi}\left(\Z\right),
\end{equation}
where $\Z\in\mathbb{R}^{hw\times s}$ denote the to-be-reconstructed HSI. $\Z$ is generally considered to contain the clean HSI term and noise term mixed in the signals during the acquisition. Denote the clean HSI and noise as $\X$, $\E$, respectively, and then we can get $\Z=\X+\E$. The compressive operator $ \mathbf{\Psi} $ can be instantiated as $ \mathbf{\Psi}=\mathbf{D}\cdot\mathbf{H}\cdot\mathbf{P}$, where $\mathbf{H}$ is a random permutation matrix, $\mathbf{P}$ is the WalshHadamard transform, and $\mathbf{D}$ is a random down sampling operator. Such a compressive operator has been shown to satisfy the restricted isometry property and successfully used in dealing with various compressed sensing problems \cite{Jiang2017Surveillance,Gogna2015Split,CaoW2016H}.


Recovering $\Z$ based on (\ref{cs_function}) is an ill-posed inverse problem, and thus prior information should necessarily be exploited. Here we simply adopted the proposed E-3DTV regularization as the unique prior term in the model. In addition, considering the actual transmission, some seriously polluted bands have little or no information. In order to save the transmission resources, these bands can be thrown away at transmission. So here, we assume that the noise obeys a Gaussian distribution as previous works did \cite{Wang2017Compressive}, and a $\ell_2$ norm term is adopted on the noise term. The proposed HSI compressed sensing model is then formulated as follows:
\begin{equation}\label{CS_model0}
\begin{split}
&\min_{{\X},{\E}} \tau\|\X\|_{\scriptsize{\mbox{E-3DTV}}}+\frac{1}{2}\|{\E}\|_2 \\
&~{\rm s.t.}~ y=\mathbf{\Psi}\left({\X}+{\E}\right),\\
\end{split}
\end{equation}
where $\mathbf{\tau}$ is the trade-off parameter for controlling the sparsity of the gradient map of the HSI and noise regularization parameter. By adopting (\ref{Reg2}), and introducing a auxiliary matrix $\Z = \X+\E$, we can rewrite (\ref{CS_model0}) as:
\begin{equation}
\begin{split}\label{cs_model}
&\min_{{\X},{\Z},{\E},\mathbf{U}_n,\mathbf{V}_n} \tau\sum_{n=1}^3\|\mathbf{U}_n\|_1+\frac{1}{2}\|{\E}\|_2 \\
&~{\rm s.t.}~ y=\mathbf{\Psi}\left({\Z}\right),{\Z}=\X+{\E}\\
&\qquad \nabla_n{\X}=\mathbf{U}_n\mathbf{V}_n^T, \mathbf{V}_n^T\mathbf{V}_n=\mathbf{I}, n=1,2,3.\\
\end{split}
\end{equation}

\subsection{The ADMM Algorithm}

The proposed optimization problem (\ref{cs_model}) can be readily solved by the ADMM strategy. Firstly, the following augmented Lagrangian function is required to be minimized:
\begin{shrinkeq}{-2ex}
\begin{equation}
\small
\begin{split}\label{cs_lag}
&\mathcal{L}(\Z,\X,\E,\mathbf{U}_n,\mathbf{V}_n,\mathbf{M}_n,\mathbf{\mathbf{\Gamma}}_{1},\mathbf{\mathbf{\Gamma}}_{2}) =
\sum_{n=1}^3 \tau\|\mathbf{U}_i\|_1 \\
&+\sum_{n=1}^3{\langle \mathbf{M}_n, \nabla_n\X-\mathbf{U}_n\mathbf{V}_n^T\rangle + \frac{\mu_{n}}{2}\|\nabla_n\X-\mathbf{U}_n\mathbf{V}_n^T\|^2_F}\\
&+\langle \mathbf{\mathbf{\Gamma}}_1,y-\mathbf{\Psi}\left(\Z\right) \rangle+\frac{\mu_4}{2}\|y-\mathbf{\Psi}\left(\Z\right)\|^2_F +\frac{1}{2} \|\mathbf{E}\|_2\\
&+ \langle \mathbf{\mathbf{\Gamma}}_2,\Z-\X-\E \rangle+\frac{\mu_5}{2}\|\Z-\X-\E\|^2_F,
\end{split}
\end{equation}
\end{shrinkeq}

where $\mathbf{M}_n$, $n = 1,2,3$, $\mathbf{\mathbf{\Gamma}}_1$ and $\mathbf{\mathbf{\Gamma}}_2$ are the Lagrange multipliers and $\mu_n$s are a positive scalar in ADMM algorithm.
We then alternatively optimize each variable involved in (\ref{cs_lag}) with all others fixed, by following steps:

\textbf{Update} $\Z$. Extracting all items containing $\Z$ from Eq. (\ref{cs_lag}), we can obtain that:
\begin{equation}
\begin{split}\label{fun_x_cs}
&\Z:=\mathop{\argmin}_{\Z} \frac{\mu_4}{2}\left\|\mathbf{\Psi}\left(\Z\right)-\left(y+\frac{\mathbf{\mathbf{\Gamma}}_1}{\mu_4}\right) \right\|_F^2+\\
&\qquad \frac{\mu_5}{2}\left\|\Z-\left(\X+\E+\frac{\mathbf{\mathbf{\Gamma}}_2}{\mu_5}\right) \right\|_F^2.\\
\end{split}
\end{equation}

Optimizing (\ref{fun_x_cs}) with respect to $\Z$ can thus be equivalently treated as solving the following linear system:
\begin{equation}
\label{solver_x_cs}
\begin{split}
&\mu_4\mathbf{\Psi}^*\mathbf{\Psi}(\Z) +\mu_5\Z=\\
& \ \ \ \ \ \ \mu_5(\X+\E)+\mu_4\mathbf{\Psi}^*y+\mathbf{\Psi}^*\mathbf{\mathbf{\Gamma}}_1-\mathbf{\mathbf{\Gamma}}_2,\\
\end{split}
\end{equation}
where $\mathbf{\Psi}^* $ indicates the adjoint of $\mathbf{\Psi} $. This linear system can be solved by off-the-shelf techniques such as the preconditioned conjugate gradient method.

\textbf{Update} $\X$. Extracting all items containing $\X$ from Eq. (\ref{cs_lag}), similar to the process of solving $\Z$, we can obtain that optimizing  $\X$ is equivalent to solve the following linear system:
\begin{equation}
\begin{split}\label{fun_x1_cs}
& \sum_{n=1}^3\mu_{n}\mathbf{D}^*\mathbf{D}(\X) +\mu_5\X=\\
&\sum_{n=1}^3\mu_{n}\mathbf{D}^*\left(\mathbf{U}_{n}\mathbf{V}_{n}^{T}+\frac{\mathbf{M}_{n}}{\mu_{n}}\right)+\mu_{5}\left(\Z\!-\!\E\!+\!\frac{\mathbf{\mathbf{\Gamma}}_{2}}{\mu_{5}}\right).\\
\end{split}
\end{equation}

Similar to the process of solving $\mathbf{X}$ in the GCS denosing model, we can get that $\mathbf{X}$ can be efficiently computed by
\begin{equation}
\small
\label{slover_x1_cs}
\left\{
\begin{split}
&\mathbf{H}_x=\mu_{5}(\Z-\E)+\mathbf{\mathbf{\Gamma}}_{2}\! +\!\sum_{n=1}^3 \mathbf{D}_n^*(\mu_{n}\mathbf{U}_n\mathbf{V}_n^{T}+\mathbf{M}_n),\\
&\mathbf{T}_x=\mu_{1}|\text{fftn}(\mathbf{D}_1)|^2+\mu_{2}|\text{fftn}(\mathbf{D}_2)|^2+\mu_{3}|\text{fftn}(\mathbf{D}_3)|^2, \\
&\X=\text{ifftn}\left(\frac{\text{fftn}(\mathbf{\mbox{Fold}(H_x)})}{\mu\mathbf{1}+\mu\mathbf{T}_x}\right),
\end{split}
\right.
\end{equation}
where fftn and ifftn indicate fast Fourier transform and its inverse transform, respectively.

\textbf{Update} $\mathbf{U}_n,n=1,2,3$. Extracting all items containing $\mathbf{U}_n$ from Eq. (\ref{cs_lag}), similar to the process of solving $\mathbf{U}_{n}$ in the GCS denosing model, we get that $\mathbf{U}_{n}$ can be efficiently computed by
\begin{equation}
\label{slover_u_cs}
\mathbf{U}_n=\mathcal{S}_{\frac{\tau}{\mu_{n}}}\left(\left(\nabla_n\X+\mathbf{M}_n/\mu_{n}\right)\mathbf{V}_n\right).
\end{equation}

\textbf{Update} $\mathbf{V}_n,n=1,2,3$. Extracting all items containing orthogonal $\mathbf{V}_n$ from Eq. (\ref{cs_lag}), similar to the process of solving $\mathbf{V}_{n}$ in the GCS denosing model, we get that $\mathbf{V}_{n}$ can be efficiently computed by
\begin{equation}
\label{solver_v_cs}
\left\{
\begin{split}
& [\mathbf{B},\mathbf{D},\mathbf{C}]= \text{svd}((\nabla_n\X+\mathbf{M}_n/\mu)^T\mathbf{U}_n),\\
& \mathbf{V}_n=\mathbf{B}\mathbf{C}^T.
\end{split}
\right.
\end{equation}

\textbf{Update} $\E$. Extracting all items containing $\E$ from Eq.(\ref{cs_lag}),  similar to the process of solving $\E$ in the GCS denosing model, we can get that $\E$ can be efficiently computed by

\begin{equation}
\label{slover_e_cs}
\E=\frac{\mu_{5}(\Z-\X)+\mathbf{M}_{5}}{1+\mu_{5}}.
\end{equation}

Based on the general ADMM principle, the multipliers are further updated by the following equations:
\begin{equation}
\label{solver_m_cs}
\small
\left\{
\begin{split}
&\mathbf{M}_n=\mathbf{M}_n+\mu_{n}\left(\nabla_n\X-\mathbf{U}_n\mathbf{V}_n^{T} \right),n=1,2,3,\\
&\mathbf{\mathbf{\Gamma}}_1=\mathbf{\mathbf{\Gamma}}_1+\mu_{4}\left(y-\mathbf{\Psi}(\Z)\right),\\
&\mathbf{\mathbf{\Gamma}}_2=\mathbf{\mathbf{\Gamma}}_2+\mu_{5}\left(\Z-\X-\E\right).\\
\end{split}
\right.
\end{equation}

The above steps can then be summarized as $\mathbf{Algorithm}$ $\mathbf{2}$, for solving the model (\ref{cs_model}).
\begin{algorithm}
\caption{The E-3DTV for HSI compressed sensing reconstruct algorithm}\label{alg_cs}
\small
\begin{algorithmic}[1]
\renewcommand{\algorithmicrequire}{\textbf{Input:}}
\renewcommand{\algorithmicensure}{\textbf{End}}
\REQUIRE The compressive measures $y\in\mathbb{R}^{m},m\ll h\times w\times s$,  TV unfolding matrix rank: $r$, Regularized parameters $\tau$, Stopping criteria $\epsilon_1$,$\epsilon_2$.

\renewcommand{\algorithmicrequire}{\textbf{Initialization:}}
\renewcommand{\algorithmicensure}{\textbf{End}}
\REQUIRE Initial $\Z$, $\X$, $\E$, $\textbf{U}_n$, $\textbf{V}_n$, $\mathbf{M}_{n}$, $\mathbf{\mathbf{\mathbf{\Gamma}}}_{1}$, $\mathbf{\mathbf{\mathbf{\Gamma}}}_{2}$.

\WHILE {not converge}
\STATE
 Update $\Z$, $\X$, $\E$  by Eq.(\ref{solver_x_cs}), Eq.(\ref{slover_x1_cs})  and (\ref{slover_e_cs}), respectively.
\STATE
Update $\mathbf{U}_{n}$, $\mathbf{V}_{n}$ by Eq.(\ref{slover_u_cs}) and (\ref{solver_v_cs}), respectively .
\STATE
Update the multipliers $\mathbf{M}_{n}$, $\mathbf{\mathbf{\Gamma}}$ by Eq. (\ref{solver_m_cs}).
\STATE
Check the convergence conditions\\
$\quad \Vert y-\mathbf{\Psi}(\mathbf{X})\Vert_2^F\leq\epsilon_1$\\
$\quad \Vert\nabla_{n}\Z-\mathbf{U}_n\mathbf{V}_n^T\Vert_2^F/\Vert\mathbf{Y}\Vert_2^F\leq\epsilon_2,n=1,2,3$\\
\ENDWHILE

\renewcommand{\algorithmicrequire}{\textbf{Output:}}
\renewcommand{\algorithmicensure}{\textbf{End}}
\REQUIRE  The reconstructed HSI image $\Z\in\mathbb{R}^{w\times h\times s}$.
\end{algorithmic}
\end{algorithm}

\begin{table*}[!]
\footnotesize
\caption{Quantitative comparison of all competing methods under different sampling ratio on four datasets.}
\centering
\begin{tabular}{ c c c c c c c c c }
\Xhline{3\arrayrulewidth}
Data & Sampling & Quality &\multicolumn{6}{c}{Methods}\\
\cline{4-9}
set & ratio & indices &SpaRCS & KCS & JTTV & SLNTCS & JTenRe3DTV &ours\\
\Xhline{2\arrayrulewidth}
\multirow{15}{*}{Urban} &\multirow{3}{*}{0.3\%} &PSNR &18.04 &15.01 &18.90 &17.62 &23.48 &\textbf{24.17} \\
& & SSIM &0.2974 &0.1121 &0.3730 &0.1552 &0.3388 &\textbf{0.5881} \\
& & ERGAS &462.37 &693.67 &419.99 &485.40 &255.74 &\textbf{230.26} \\
\cline{2-9}
&\multirow{3}{*}{1\%} &PSNR &18.33 &18.51 &19.96 &20.42 &27.93 &\textbf{28.86} \\
& & SSIM &0.3119 &0.1982 &0.4489 &0.2348 &0.6049 &\textbf{0.7976} \\
& & ERGAS &441.76 &515.53 &373.06 &351.99 &158.96 &\textbf{141.51} \\
\cline{2-9}
&\multirow{3}{*}{5\%} &PSNR &19.87 &20.54 &26.64 &24.98 &35.18 &\textbf{38.10} \\
& & SSIM &0.3789 &0.4139 &0.7775 &0.5326 &0.8305 &\textbf{0.9506} \\
& & ERGAS &399.97 &370.58 &182.05 &219.86 &77.48 &\textbf{72.98} \\
\cline{2-9}
&\multirow{3}{*}{10\%} &PSNR &23.84 &23.45 &34.22 &27.62 &36.99 &\textbf{41.60} \\
& & SSIM &0.5619 &0.5118 &0.9266 &0.6002 &0.8724 &\textbf{0.9688} \\
& & ERGAS &298.62 &272.78 &91.50 &159.94 &65.27 &\textbf{64.83} \\
\cline{2-9}
&\multirow{3}{*}{20\%} &PSNR &30.45 &29.94 &42.35 &30.60 &38.04 &\textbf{43.13} \\
& & SSIM &0.7356 &0.6648 &\textbf{0.9745} &0.7322 &0.8894 &0.9727 \\
& & ERGAS &118.62 &124.77 &\textbf{54.40} &118.46 &57.54 &62.06 \\
\Xhline{2\arrayrulewidth}
\multirow{15}{*}{DC Mall} &\multirow{3}{*}{0.3\%} &PSNR &18.46 &16.24 &19.75 &17.83 &22.10 &\textbf{24.46} \\
& & SSIM &0.2228 &0.047 &0.3066 &0.1188 &0.3205 &\textbf{0.6171} \\
& & ERGAS &467.33 &635.11 &408.50 &518.46 &324.00 &\textbf{244.14} \\
\cline{2-9}
&\multirow{3}{*}{1\%} &PSNR &18.70 &19.48 &20.86 &20.66 &28.90 &\textbf{30.10} \\
& & SSIM &0.2409 &0.1423 &0.4129 &0.2708 &0.7897 &\textbf{0.8689} \\
& & ERGAS &456.27 &450.13 &360.06 &373.86 &142.82 &\textbf{128.59} \\
\cline{2-9}
&\multirow{3}{*}{5\%} &PSNR &20.40 &19.68 &27.14 &25.40 &38.92 &\textbf{41.32} \\
& & SSIM &0.4177 &0.4724 &0.7908 &0.5752 &0.9673 &\textbf{0.9871} \\
& & ERGAS &430.98 &522.79 &185.48 &220.35 &47.26 &\textbf{37.12} \\
\cline{2-9}
&\multirow{3}{*}{10\%} &PSNR &24.63 &19.94 &34.52 &27.92 &40.12 &\textbf{44.46} \\
& & SSIM &0.6095 &0.4939 &0.9479 &0.7387 &0.9746 &\textbf{0.9936} \\
& & ERGAS &347.45 &467.29 &84.84 &165.81 &41.56 &\textbf{26.38} \\
\cline{2-9}
&\multirow{3}{*}{20\%} &PSNR &33.65 &30.94 &44.65 &34.33 &41.15 &\textbf{47.32} \\
& & SSIM &0.9065 &0.8564 &0.9940 &0.9043 &0.9792 &\textbf{0.9966} \\
& & ERGAS &82.31 &118.52 &25.79 &85.68 &36.94 &\textbf{19.08} \\
\Xhline{2\arrayrulewidth}
\multirow{15}{*}{Moffett Field} &\multirow{3}{*}{0.3\%} &PSNR &26.58 &23.71 &27.66 &24.85 &29.78 &\textbf{30.17} \\
& & SSIM &0.5592 &0.001 &0.5975 &0.0333 &0.2869 &\textbf{0.6995} \\
& & ERGAS &264.89 &368.24 &237.38 &331.08 &\textbf{180.28} &182.70\\
\cline{2-9}
&\multirow{3}{*}{1\%} &PSNR &27.13 &26.03 &29.01 &27.03 &34.01&\textbf{34.19} \\
& & SSIM &0.5691 &0.0166 &0.6598 &0.2666 &0.6198 &\textbf{0.8369} \\
& & ERGAS &256.62 &295.53 &206.65 &257.57&\textbf{107.48} &119.19 \\
\cline{2-9}
&\multirow{3}{*}{5\%} &PSNR &29.23 &28.07 &34.73 &30.31 &41.67 &\textbf{43.42} \\
& & SSIM &0.6528 &0.2492 &0.8559 &0.4194 &0.8975 &\textbf{0.9713} \\
& & ERGAS &234.04 &226.74 &114.02 &181.53 &49.37&\textbf{47.02} \\
\cline{2-9}
&\multirow{3}{*}{10\%} &PSNR &33.36 &29.93 &39.45 &32.18 &43.18 &\textbf{47.72} \\
& & SSIM &0.7374 &0.4292 &0.9447 &0.4301 &0.9223 &\textbf{0.9833} \\
& & ERGAS &210.29 &217.25 &65.12 &178.19 &42.65 &\textbf{36.39} \\
\cline{2-9}
&\multirow{3}{*}{20\%} &PSNR &39.11 &34.35&46.85 &37.93 &44.12&\textbf{49.77} \\
& & SSIM &0.8577 &0.6157 &0.9861 &0.8197 &0.9333 &\textbf{0.9868} \\
& & ERGAS &61.08 &116.60 &\textbf{32.10} &68.19 &38.50 &33.16 \\
\Xhline{2\arrayrulewidth}
\multirow{15}{*}{Lowal Altitude} &\multirow{3}{*}{0.3\%} &PSNR &16.91 &13.75 &17.69 &16.67 &24.47 &\textbf{24.50}\\
& & SSIM &0.2687 &0.0237&0.3412 &0.0678 &0.4818 &\textbf{0.6357} \\
& & ERGAS &383.13 &490.69 &349.63 &399.84 &\textbf{146.30} &148.59\\
\cline{2-9}
&\multirow{3}{*}{1\%} &PSNR &17.23 &16.85 &18.96 &19.14 &28.47&\textbf{30.76} \\
& & SSIM &0.2821 &0.1328 &0.4341 &0.3220 &0.7344 &\textbf{0.8623} \\
& & ERGAS &374.25 &387.59 &304.41 &272.33&98.85 &\textbf{74.90} \\
\cline{2-9}
&\multirow{3}{*}{5\%} &PSNR &17.96 &20.28 &25.32 &24.28 &35.39 &\textbf{38.32} \\
& & SSIM &0.3525 &0.4824 &0.7534&0.6136 &0.8983 &\textbf{0.9526} \\
& & ERGAS &322.52 &250.12 &158.30 &151.82 &50.03&\textbf{47.82} \\
\cline{2-9}
&\multirow{3}{*}{10\%} &PSNR &21.24&26.99 &32.97&31.56 &36.91 &\textbf{41.24} \\
& & SSIM &05326 &0.6522 &0.9151 &0.8385 &0.9242 &\textbf{0.6490}\\
& & ERGAS &231.86 &112.80 &73.35 &76.41 &43.64 &\textbf{41.77} \\
\cline{2-9}
&\multirow{3}{*}{20\%} &PSNR &31.60 &30.18&42.47 &33.88 &38.03&\textbf{43.05} \\
& & SSIM &0.8517 &0.7891 &\textbf{0.9724} &0.8927 &0.9389 &0.9693 \\
& & ERGAS &104.22&81.92 &\textbf{35.34} &55.43 &38.34 &42.82 \\
\Xhline{3\arrayrulewidth}
\end{tabular}
\label{table:assement3}
\end{table*}
\begin{figure*}[!]
\centering
\includegraphics[scale=0.5]{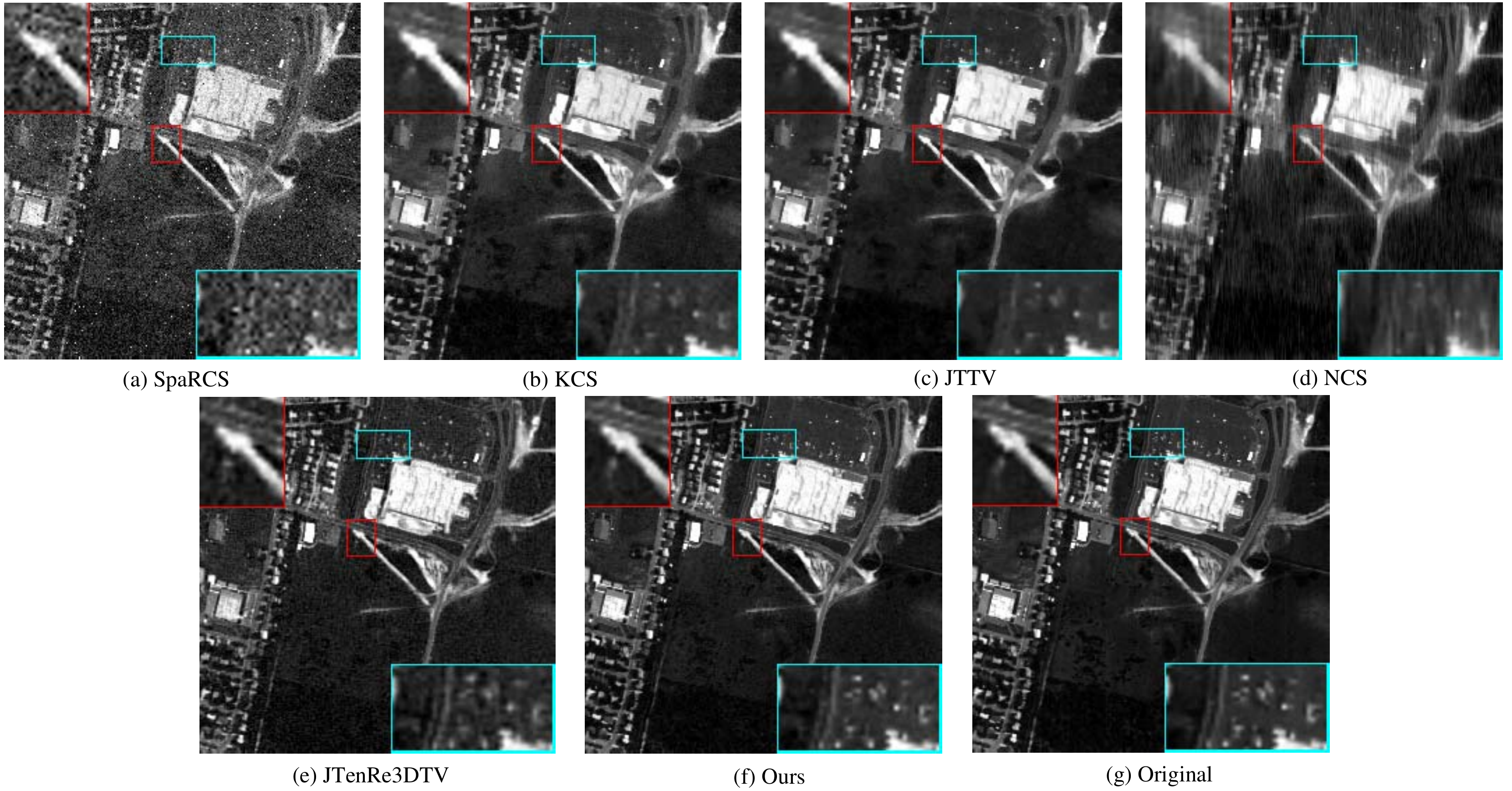}
\caption{The restoration performance of all competing methods on the band 5 of real Urban HSI.}
\label{fig_urban_cs_5}
\end{figure*}

\begin{figure*}[!]
\centering
\includegraphics[scale=0.5]{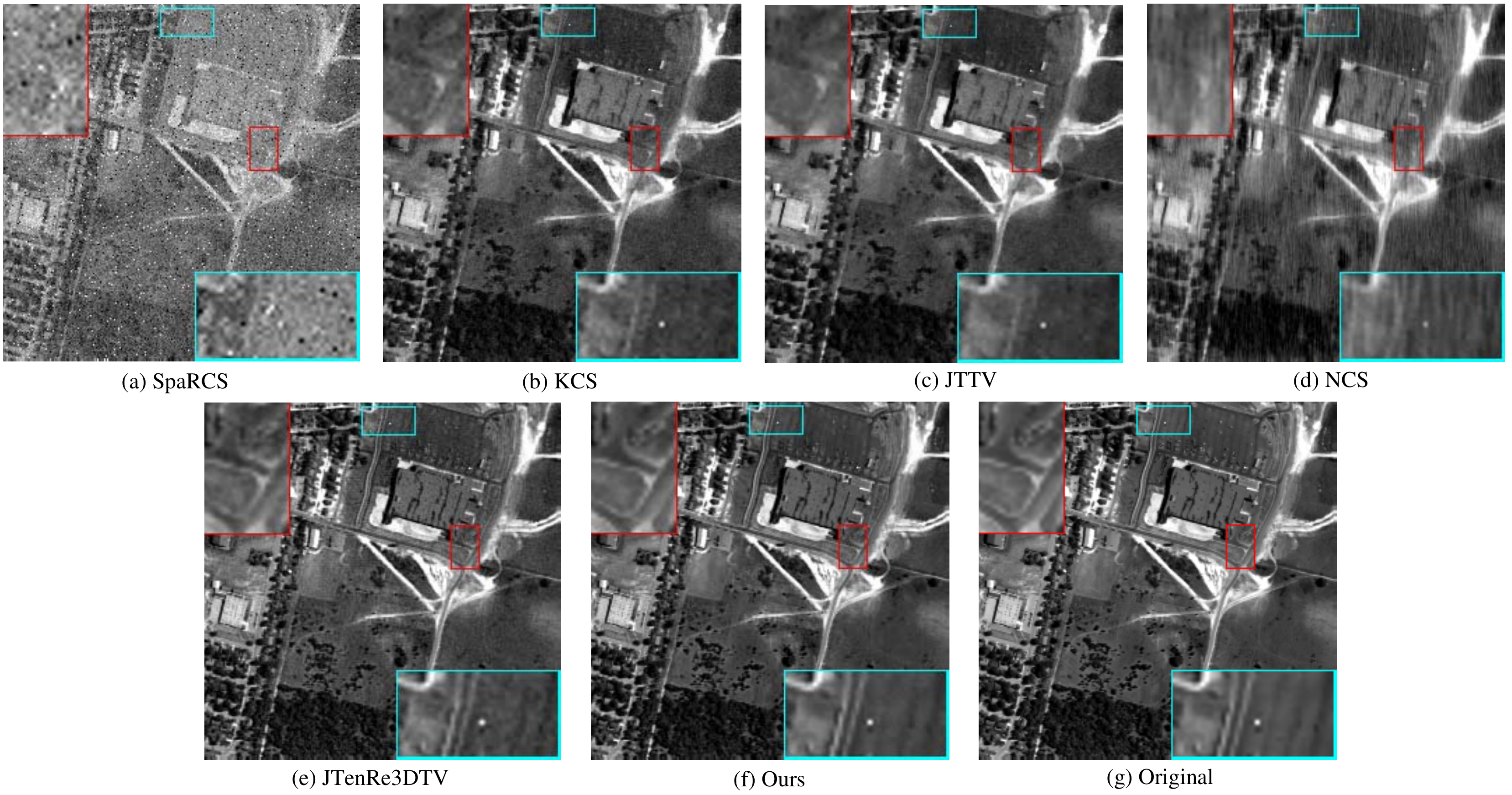}
\caption{The restoration performance of all competing methods on the band 160 of real Urban HSI.}
\label{fig_urban_cs_160}
\end{figure*}

\begin{figure*}[!]
\centering
\includegraphics[scale=0.5]{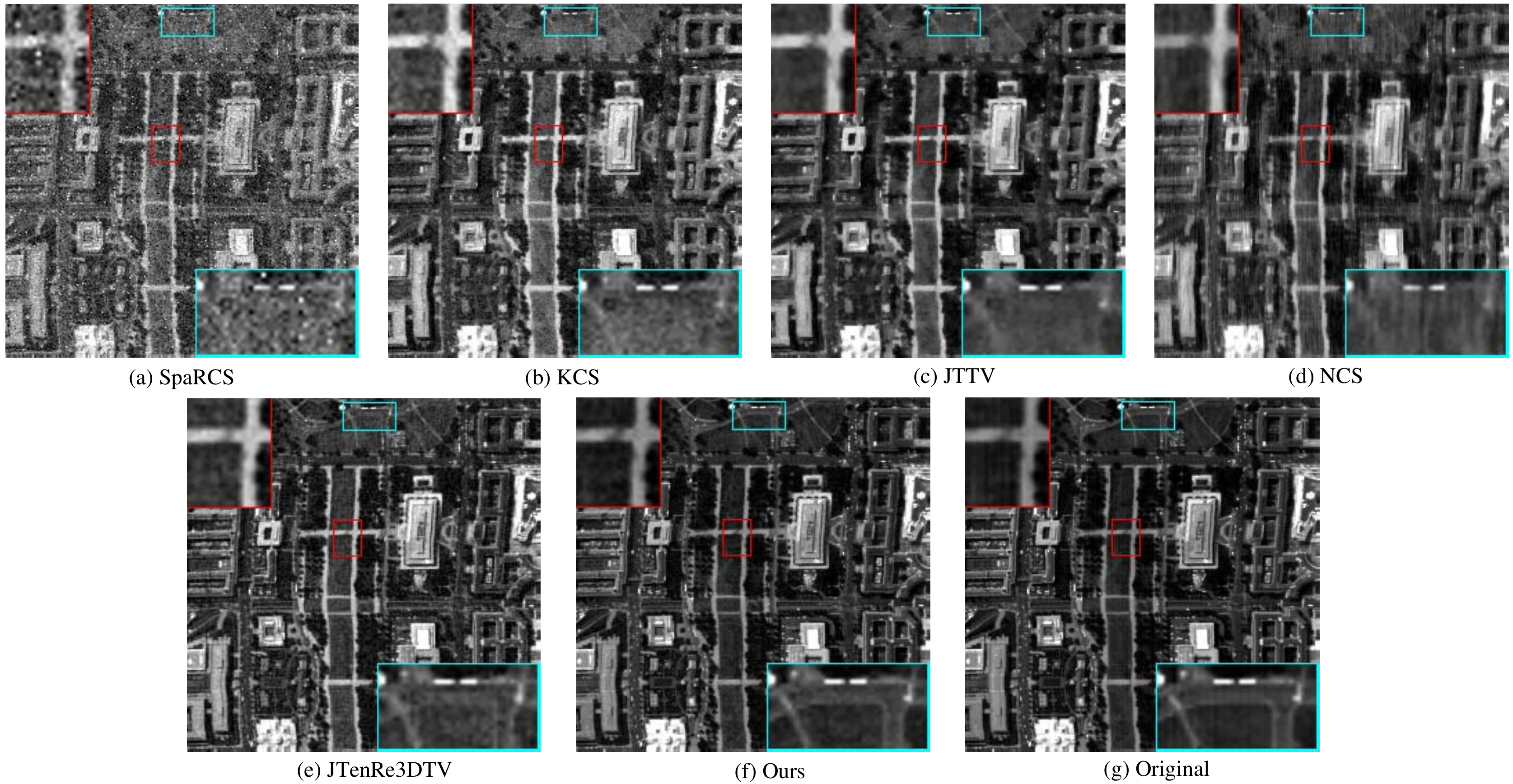}
\caption{The restoration performance of all competing methods on the band 5 of real DC mall HSI.}
\label{fig_dc_cs_5}
\end{figure*}

\begin{figure*}[!]
\centering
\includegraphics[scale=0.5]{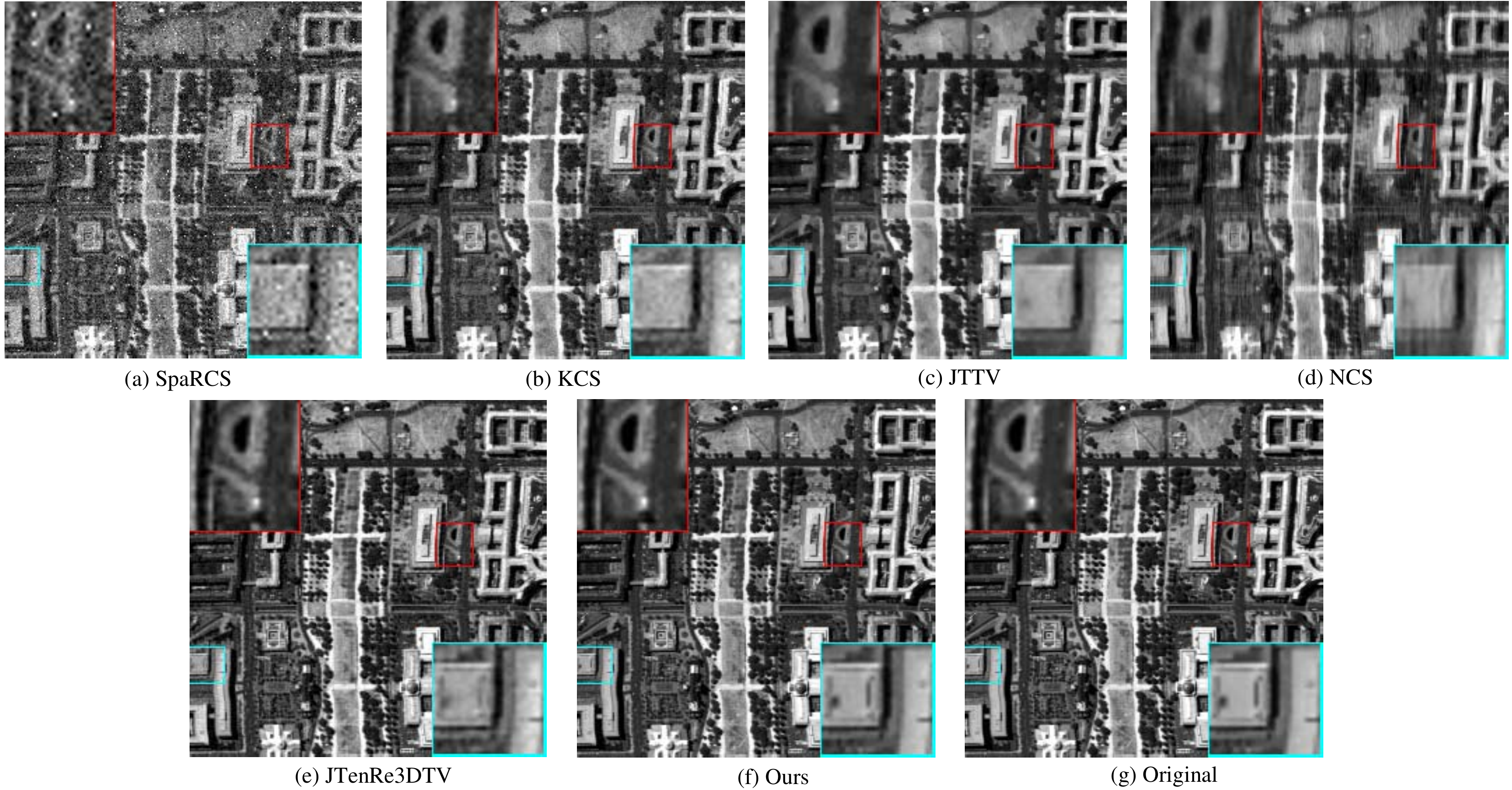}
\caption{The restoration performance of all competing methods on the band 160 of real DC mall HSI.}
\label{fig_dc_cs_160}
\end{figure*}

\begin{figure*}[!]
\centering
\includegraphics[scale=0.5]{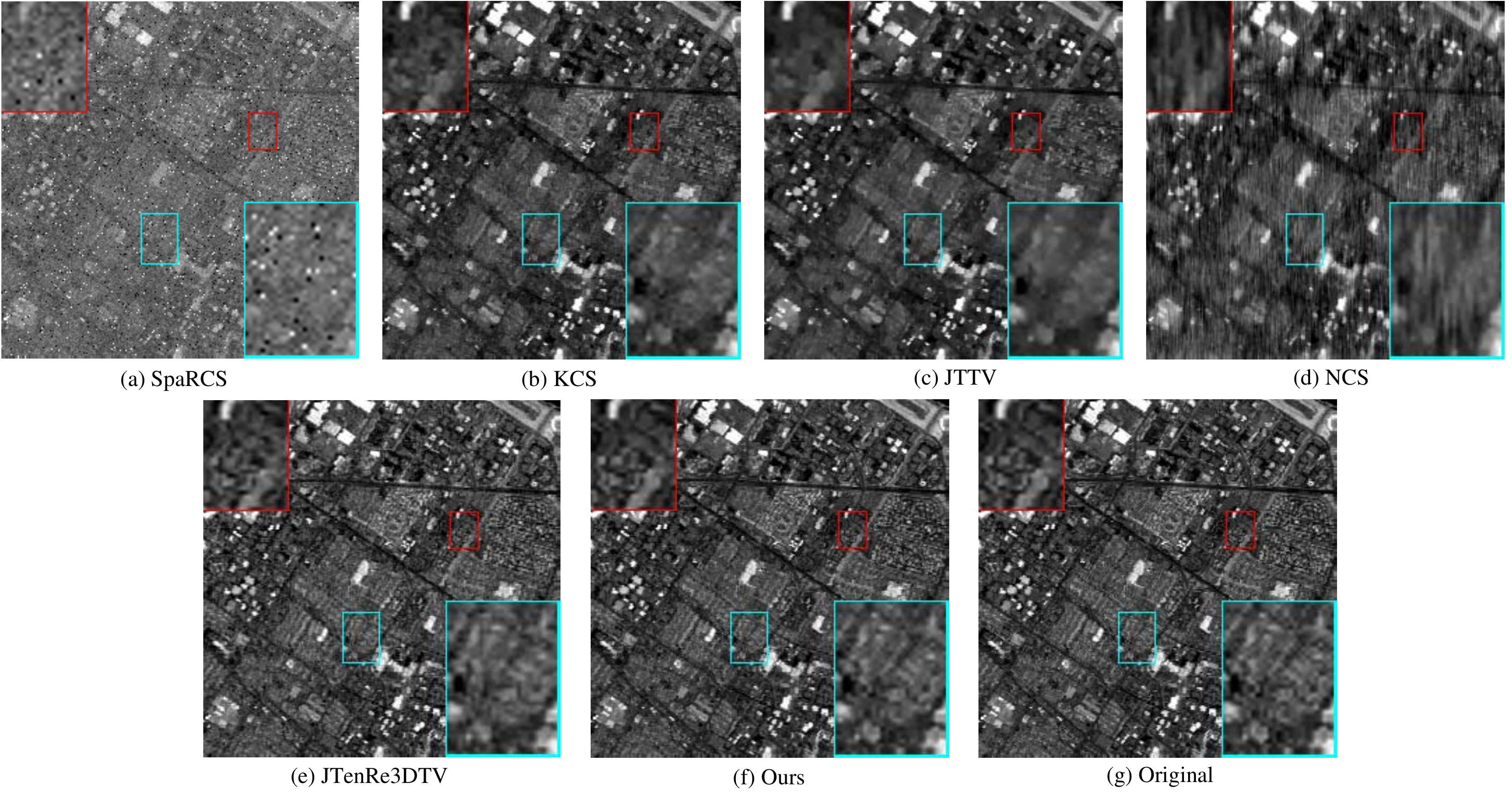}
\caption{The restoration performance of all competing methods on the band 80 of real moffett HSI.}
\label{fig_moffett_cs_80}
\end{figure*}

\begin{figure*}[!]
\centering
\includegraphics[scale=0.5]{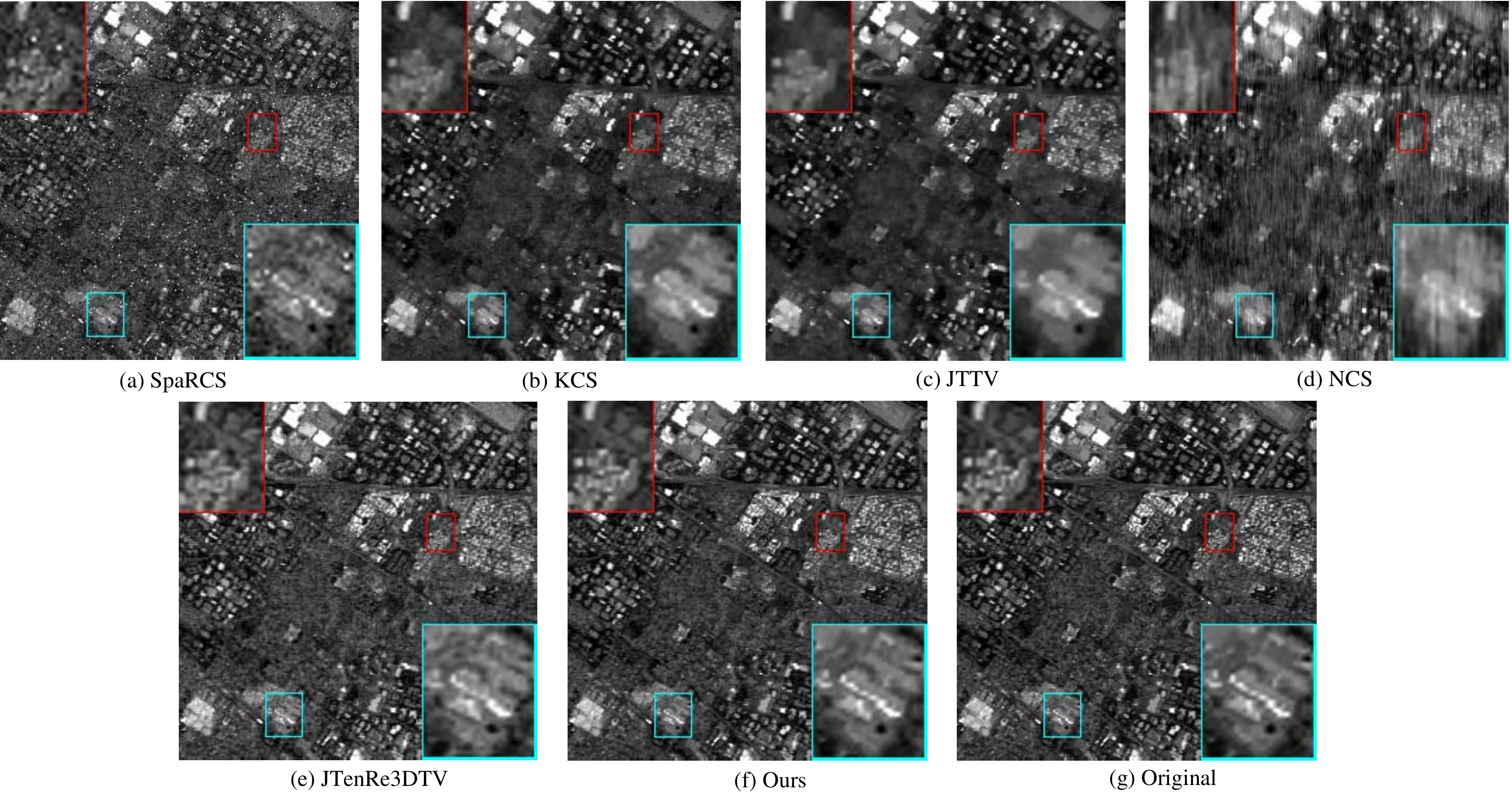}
\caption{The restoration performance of all competing methods on the band 160 of real moffett  HSI. }
\label{fig_moffett_cs_160}
\end{figure*}

\begin{figure*}[!]
\centering
\includegraphics[scale=0.5]{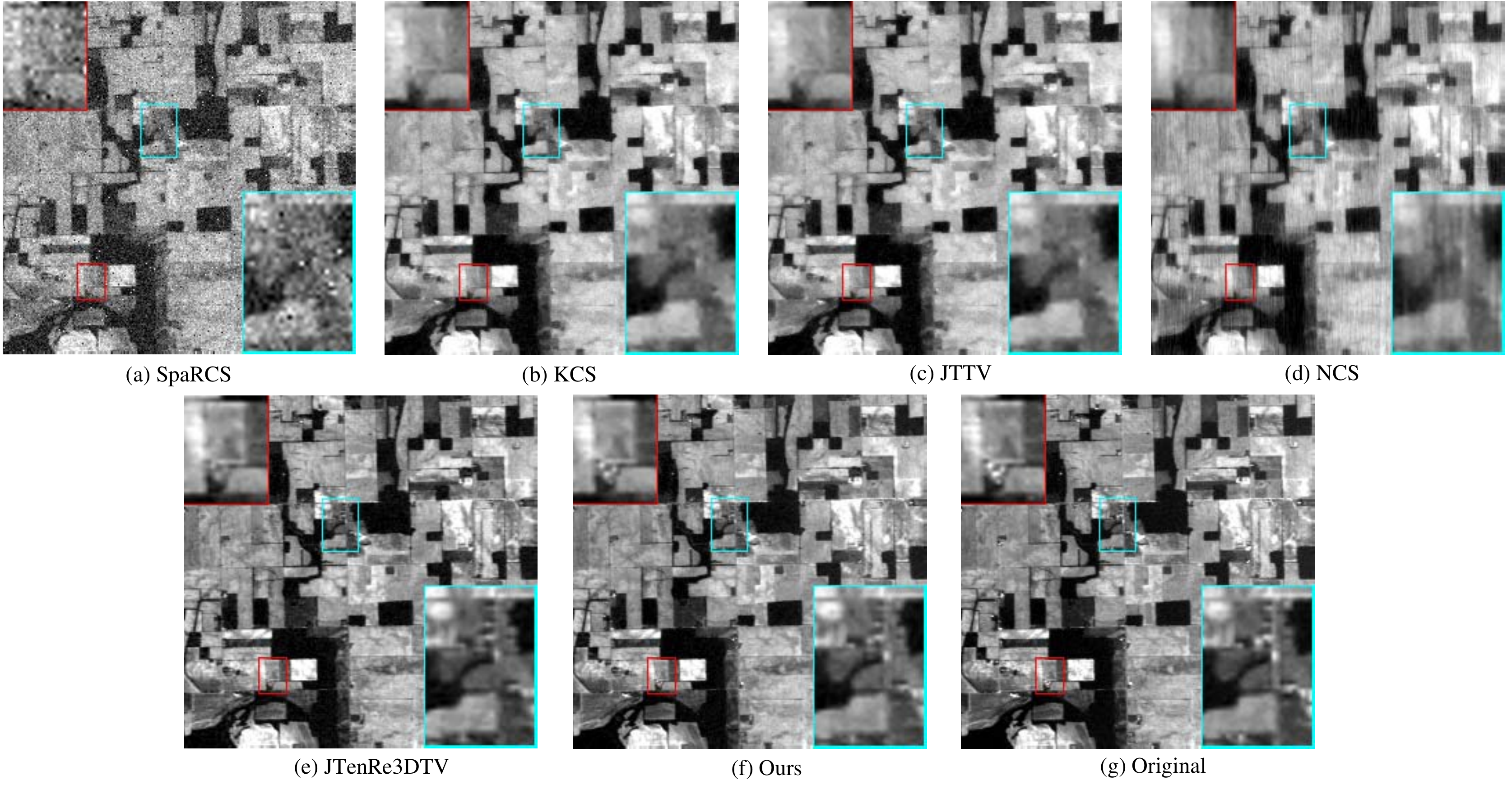}
\caption{The restoration performance of all competing methods on the band 5 of real lowal HSI.}
\label{fig_lowal_cs_5}
\end{figure*}

\begin{figure*}[!]
\centering
\includegraphics[scale=0.5]{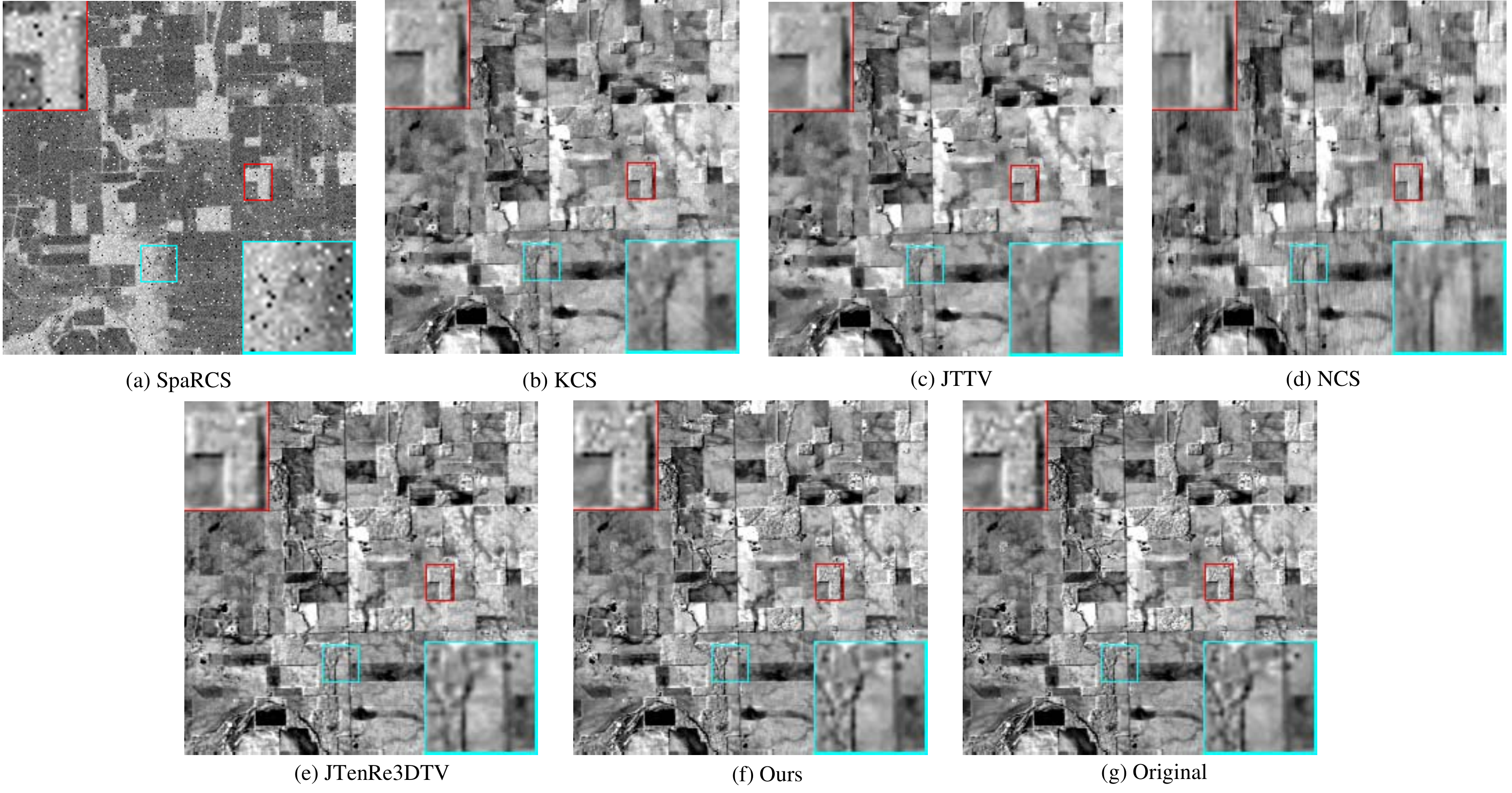}
\caption{The restoration performance of all competing methods on the band 160 of real lowal HSI.}
\label{fig_lowal_cs_80}
\end{figure*}

\section{Experimental Results on HSI Compressed Sensing}
In this section, several real HSI data experiments were carried out to substantiate the effectiveness of the E-3DTV regularizer on HSI compressed sensing.

Four popular real-world HSI data sets were used in our experiments, including the HYDICE Urban,  the HYDICE Washington DC Mall, the AVIRIS Moffett Field, and the AVIRIS Lowal altitude data sets.
After removing the seriously polluted bands and cropping images for each data set, the HSI cube used for the experiments are of sizes $256 \times  256 \times 188$, $256 \times  256 \times 191$ ,$256 \times  256 \times 200$ and $256 \times  256 \times 200$, respectively.

To thoroughly evaluate the performance of the proposed method, we implemented five state-of-the-art methods for comparison, including
Kronecker compressed sensing (KCS) \cite{Li2012A}, low-rank and joint-sparse recovery (SparCS) \cite{Golbabaee2012Hyperspectral}, joint nuclear/TV norm minimization (JTTV) \cite{Golbabaee2013Joint}, self-learning tensor nonlinear compressed sensing (SL-TNCS) \cite{Yang2015Compressive} and Joint tensor/reweight 3DTV norm minimization (JTenRe3-DTV) \cite{Wang2017Compressive}.
As previously stated, we adopted the random permuted Hadamard transform as the compressive operator. Five different sampling ratios, namely $0.3\%$, $1\%$, $5\%$, $10\%$, $20\%$ were considered. To sufficiently assess the performance of all compared approaches, we also employed the above three quantitative picture quality indices (PQIs), PSNR, SSIM and ERGAS, for performance evaluation.

As for parameter settings, there are two parameters in the model that need to be set: the desired rank $r$ and the sparsity basis coefficient $\tau$. Note that with the increase of the sampling rate, the ideal $r$ and $\tau$ would also increase. Specifically, we set the value of rank $r$ as 4 or 5, and the sparsity basis coefficient $\tau$ as 0.0075 at the sampling rate are $0.3\%$, $1\%$. With the increase of the sampling rate, the small value of rank and sparsity basis coefficient will cause the algorithm to lose some information, so as the sampling rates are $5\%$, $10\%$, $20\%$, we increase the rank $r$ as a value from $7$ to $10$, and set the sparsity basis coefficient $\tau$ as 0.015. As for the competing methods' parameters, we use the default settings or tuned all their values to possibly guarantee their optimal performance for fair comparison.

Table. \ref{table:assement3} compares the reconstruction results by all the compared methods. It can be easily seen that, the proposed method achieves a significantly improved performance under all such five sampling ratios with all PQIs, as compared with other competing methods.

In terms of visual quality, two representative bands of four HSIs in sampling ratio $5\%$ were presented in Fig. \ref{fig_urban_cs_5}-\ref{fig_lowal_cs_80}. Some demarcated areas in the subfigure are enlarged in the right bottom corner and left top corner for better visualization. From these figures and the quantitative indices numerical table, we can see that the method utilizing the local smoothness property of image, such as the JTTV, SL-NCS, JTenRe3DTV, can relatively more effectively eliminate the non-smoothness caused by compression reconstruction as the restoration image of SpaRCS show. Meanwhile, it is evident that the method based on the E-3DTV regularizer can preserve the best detail edge, texture information and image fidelity than other competing method.

\begin{figure*}[!]
\centering
\includegraphics[scale=0.48]{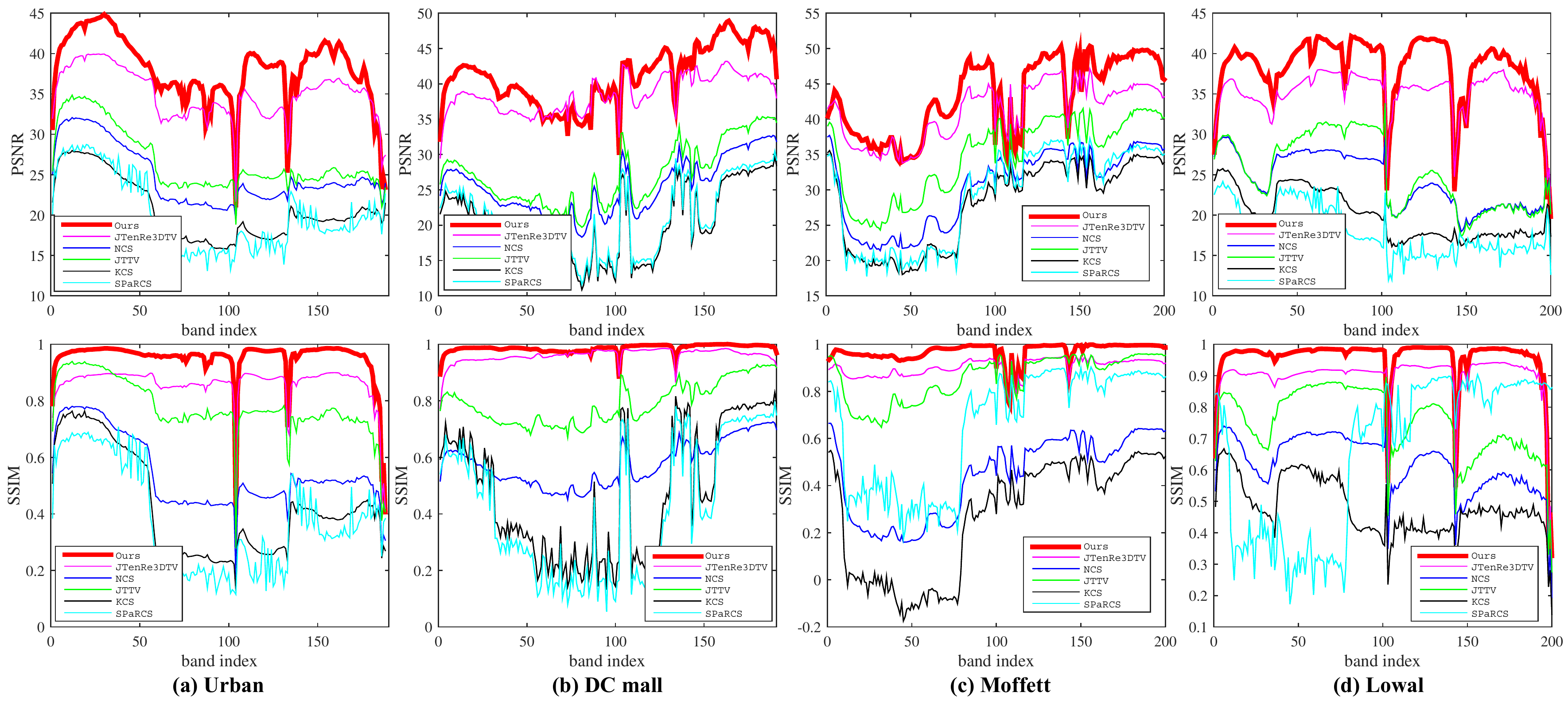}
\caption{ The quantitative comparisons of different methods for each band of four data sets under the sampling ratio $5\%$ in terms of PSNR and SSIM. The column from the first to the fourth correspond to the results obtained on Urban, Moffett Field, DC Mall, and Lowal Altitude, respectively.}
\label{cs_indices}
\end{figure*}

We also show the PSNR and SSIM values by reconstructing each band of three data sets under sampling ratio $5\%$ in Fig. \ref{cs_indices}. It is easy to see that the proposed method can get higher SSIM and PSNR values than other ones for almost all HSI bands. This comprehensively illustrates the robustness of our proposed method.

From the above presentation, we can see that the structures of these four real HSIs are different, while our proposed method can get satisfactory performance. Combined the performance of denoising, it should be expected to use the E-3DTV regularizer as a general tool to ameliorate the performance for general HSI processing tasks.

\section{Conclusion}

In this paper, we have introduced an enhanced 3DTV regularization (E-3DTV, briefly), to better reflect the sparsity characteristic of the gradient maps of a natural HSI. It is surprising to us that by simply embedding such regularization term as the unique regularizer into the models designed for HSI denoising and compressed sensing, respectively, it attains state-of-the-art performance superior to the methods previously used for the tasks. This makes it hopeful further extend such a regularizer to more general HSI processing tasks, which is the main task of our future research.

There are still some limitations on the practical use of the proposed E-3DTV regularizer. For example, it should be necessary to consider to design an automatic parameter tuning strategy based on certain HSI as well as its size and structure complexity. Besides, the E-3DTV regularizer is still imposed on the unfolding matrices of the considered HSI, while not its multi-dimensional structure (i.e., tensor) itself. It should be another meaningful investigation on reformulating the proposed regularizer directly on the tensor to more sufficiently utilizing the structural knowledge of the HSI. We will further investigate these issues in our future research.

\section*{Appendix}

In this supplementary material, we provide the proof of the equivalence between problem (10) and problem (11) in the maintext.
\subsection*{Proof of the equivalence}

We call two problems equivalent if from a solution of one, a solution
of the other is readily found, and vice versa.
We can then give the following theorem:
\begin{theorem}
\label{theorem_e}
For any $\G\in\mathbb{R}^{hw\times s}$, the problem
\begin{equation}
\begin{split}\label{CSC_new}
&\min_{\V  \in \mathbb{R}^{s\times r}} \| \G \V  \|_1 \\
&{ \rm s.t.}~ \|\G \V \|_F=\|\G \|_F, \mathbf{V} ^T\mathbf{V} =\mathbf{I},~
\end{split}
\end{equation}
is equivalent to
\begin{equation}
\begin{split}\label{CSC}
&\min_{\U ,\V }\|\mathbf{U} \|_1 \\
&{\rm s.t.}~ \G =\mathbf{U} \mathbf{V} ^T,~ \mathbf{V} ^T\mathbf{V} =\mathbf{I},\\&~~~~~~
\mathbf{U}  \in\mathbb{R}^{hw\times r}, ~\mathbf{V}  \in\mathbb{R}^{s\times r}.\\
\end{split}
\end{equation}
\end{theorem}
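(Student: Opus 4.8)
The plan is to exhibit an explicit, objective-preserving correspondence between the feasible sets of the two problems, from which equality of the optimal values and a bijection between optimal solutions follow immediately.

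First I would note that in the second problem (minimizing $\|\U\|_1$ subject to $\G = \U\V^T$, $\V^T\V = \mathbf{I}$) the factor $\U$ is not a free variable at all: post-multiplying $\G = \U\V^T$ by $\V$ and using $\V^T\V = \mathbf{I}$ yields $\U = \G\V$. So every feasible pair is of the form $(\G\V,\V)$ where $\V$ ranges over orthonormal-column matrices satisfying $\G = (\G\V)\V^T$, and its objective value is $\|\G\V\|_1$ --- literally the objective of the first problem. Hence the whole question reduces to showing that, among matrices $\V$ with $\V^T\V = \mathbf{I}$, the constraint $\|\G\V\|_F = \|\G\|_F$ (first problem) and the constraint $\G = \G\V\V^T$ (second problem, after substituting $\U = \G\V$) cut out the same set.

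The crux is a Pythagorean identity. When $\V^T\V = \mathbf{I}$, the matrix $\V\V^T$ is an orthogonal projector, and the decomposition $\G = \G\V\V^T + \G(\mathbf{I}-\V\V^T)$ is orthogonal with respect to the Frobenius inner product, since $\mbox{tr}\big(\V\V^T\G^T\G(\mathbf{I}-\V\V^T)\big) = 0$ by cyclicity of the trace together with $\V^T\V = \mathbf{I}$. This gives $\|\G\|_F^2 = \|\G\V\V^T\|_F^2 + \|\G(\mathbf{I}-\V\V^T)\|_F^2 = \|\G\V\|_F^2 + \|\G(\mathbf{I}-\V\V^T)\|_F^2$, where the last step again uses $\V^T\V = \mathbf{I}$. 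Therefore $\|\G\V\|_F = \|\G\|_F$ if and only if $\G(\mathbf{I}-\V\V^T) = 0$, i.e. if and only if $\G = \G\V\V^T$. I expect this orthogonal-splitting argument to be the only real content of the proof; everything else is bookkeeping.

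Assembling the pieces: given $\V$ feasible for the first problem, put $\U = \G\V$; then $\G = \G\V\V^T = \U\V^T$, so $(\U,\V)$ is feasible for the second problem with identical objective value. Conversely, any $(\U,\V)$ feasible for the second problem has $\U = \G\V$ and, by the identity, $\|\G\V\|_F = \|\G\|_F$, so $\V$ is feasible for the first problem with the same objective. The maps $\V \mapsto (\G\V,\V)$ and $(\U,\V) \mapsto \V$ are mutually inverse on the feasible sets and preserve the objective, so the optimal values coincide and optimizers correspond; and feasibility is non-vacuous because one may take the columns of $\V$ to be an orthonormal basis of any $r$-dimensional subspace of $\mathbb{R}^s$ containing the row space of $\G$ (which exists since $r \ge \mbox{rank}(\G)$). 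This is exactly the claimed equivalence.
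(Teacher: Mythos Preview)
Your proof is correct and follows essentially the same route as the paper: both reduce to the trace computation showing that, for $\V^T\V=\mathbf{I}$, the condition $\|\G\V\|_F=\|\G\|_F$ is equivalent to $\G=\G\V\V^T$. Your Pythagorean-projector framing unifies the two directions of the paper's argument into one identity and your remark on non-vacuous feasibility is a nice addition the paper omits, but the underlying calculation is the same.
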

\begin{proof} \textbf{(a)}.
  We first prove that for any  $[\U, \V]$ satisfies the constrains of (\ref{CSC}),  $\V $ satisfies the constrains of (\ref{CSC_new}).

  By $\G = \U \V^{T}$ and $\V^{T}\V =\mathbf{I}$, we can obtain that,
  \begin{equation}\label{eq1}
  \begin{split}
  &\|\G\V \|_F^2  = \|\U \V^{T}\V \|_F^2 = \|\U \|_F^2 \\
  =&\mbox{tr}\left(\U^{T}\U \right) =  \mbox{tr}\left(\U^{*T}\U \V^{T}\V \right) \\
  =& \mbox{tr}\left(\V \U^{T}\U \V^{T}\right) \\
  =& \|\U \V^{T}\|_F^2 \\
  =& \|\G\|_F^2.
  \end{split}
  \end{equation}
  This implies that $\V $ satisfies the constrains of (\ref{CSC_new}).

\textbf{(b)}.
We then prove that for any $\V $ satisfies the constrains of (\ref{CSC_new}), by letting $\U=\G\V$,  $[\U, \V]$ satisfies the constrains of (\ref{CSC}).

By $\|\G \V \|_F=\|\G \|_F$ and $\V^{T}\V =\mathbf{I}$, we can obtain that,
  \begin{equation}\label{eq2}
  \begin{split}
  &\|\G-\U\V^T \|_F^2 = \|\G-\G\V\V^T \|_F^2   \\
  =& \mbox{tr}\left( \G^T\G \right) - 2\mbox{tr}\left( \G^T\G\V\V^T  \right) + \mbox{tr}\left( \V\V^T\G^T\G\V\V^T \right)\\
  =& \mbox{tr}\left( \G^T\G \right) - 2\mbox{tr}\left( \V^T\G^T\G\V  \right) + \mbox{tr}\left( \V^T\G^T\G\V \right)\\
  =&\mbox{tr}\left( \G^T\G \right) - \mbox{tr}\left( \V^T\G^T\G\V  \right)\\
  =&\|\G \|_F - \|\G \V \|_F\\
  =&0.
  \end{split}
  \end{equation}
  Thus we have $\G=\U\V^T$, which implies $[\U, \V]$ satisfies the constrains of (\ref{CSC}).

  Now, it is easy to prove that for any $\V^*$ solves (\ref{CSC_new}), then $[\G\V^*,\V^*]$ solves (\ref{CSC}).  If $[\G\V^*,\V^*]$ is not a solution of (\ref{CSC}), then there is an $[\U^+,\V^+]$ satisfies the constrains of  (\ref{CSC}), and  $\|\G\V^+\|_1=\|\U^+\|_1<\|\G\V^*\|_1$. By \textbf{(a)}, we know that $\V^+$ is a satisfies the constrains of (\ref{CSC_new}), this is contradict to the assumption that  $\V^*$ solves (\ref{CSC_new}). Similarly, we can prove that if  $[\U^*,\V^*]$ solves (\ref{CSC}), then $\V^*$ solves (\ref{CSC_new}). This finishes the proof.
  $\blacksquare$
\end{proof}

By \textbf{Theorem 2} we know that the  problem (10) and problem (11) in the maintext is equivalent to each other.

\end{document}